\documentclass[letterpaper]{article}

\usepackage[preprint]{aaai2027}
\usepackage[hyphens]{url}
\usepackage{graphicx}
\usepackage{natbib}

\usepackage{caption}
\usepackage{amsmath,amssymb,amsthm,mathtools}
\usepackage{booktabs}
\usepackage{enumitem}
\newtheorem{theorem}{Theorem}
\newtheorem{proposition}{Proposition}
\newtheorem{corollary}{Corollary}
\newtheorem{lemma}{Lemma}
\newtheorem{assumption}{Assumption}
\newtheorem*{remark}{Remark}
\newcommand{\E}{\mathbb{E}}
\newcommand{\Var}{\operatorname{Var}}
\newcommand{\Cov}{\operatorname{Cov}}
\newcommand{\Corr}{\operatorname{Corr}}
\newcommand{\R}{\mathbb{R}}
\newcommand{\Ind}{\mathbf{1}}
\newcommand{\cN}{\mathcal{N}}
\newcommand{\cI}{\mathcal{I}}
\newcommand{\cF}{\mathcal{F}}

\newcommand{\PhiBar}{\overline{\Phi}}
\newcommand{\dd}{\,\mathrm{d}}

\title{The Label Defines the Timescale:\\Trait--State Limits of Temporal-Aggregate Learning}
\author{Xizhe Zhang}
\affiliations{
ORCID: 0000-0002-8684-4591 \quad \textbullet \quad zhangxizhe@gmail.com
}

\begin{document}
\maketitle
\begin{abstract}
Machine-learning benchmarks often pair a label that aggregates a long temporal horizon with input observed through one or a few short windows. Their apparent performance ceiling may therefore be an acquisition-protocol ceiling rather than a model-capacity ceiling. We study labels of the form $\Theta_{g,T}=T^{-1}\int_0^T g\{Z(t)\}\dd t$ when the latent Gaussian process contains both a stable individual trait and a correlated within-individual state. An exact protocol-conditioned Bayes-risk identity provides a common tool. First, we decompose label variance into an $O(1)$ trait component and an $O(T^{-1})$ state component, explaining why a snapshot can retain cross-sectional predictability while poorly tracking within-person change. Second, we derive task-dependent effective temporal spans: mean labels depend on the ordinary correlation time, whereas occupation-time labels depend on an entire spectrum of higher-order correlation times. Third, state-driven occupation-label variance is maximal when the stable trait lies at the threshold; window efficiency decays much more slowly away from that boundary. Under an equal segment budget, exact risks and Monte Carlo experiments show that repeated segments at one time rapidly saturate, whereas temporally dispersed observations continue to increase state explainability. The trait ceiling uses quantities available from ordinary test--retest data; only the state ceiling requires short-lag temporal calibration. The results distinguish architectural limits from protocol limits and show that the label---not duration or segment count alone---defines the relevant timescale.
\end{abstract}

\section{Introduction}
A model may stop improving because its architecture is inadequate, or because its input protocol does not contain the information required by the label. The distinction is especially important when labels summarize a long horizon while inputs are snapshots. A clinical score may refer to symptoms over weeks, a maintenance label to an operating cycle, and an ecological target to a season, while the model receives one interview, one inspection, or a few images. In such settings, a benchmark ceiling can be a sampling-protocol ceiling rather than a model-capacity ceiling.

The protocol has at least three different dimensions. The number of segments $M$ controls how precisely a fixed temporal support is measured; a window length $w$ controls the support of one recording; and the number and locations of windows $D$ control temporal coverage. These quantities are not interchangeable. More segments can denoise a snapshot and strengthen inference about a stable individual trait, but they do not reveal state innovations outside the observed support. Conversely, high cross-sectional accuracy can be driven by stable between-person differences even when the model has little sensitivity to within-person temporal change.

Our nonlinear example is an occupation-time label: the fraction of a horizon during which a latent state exceeds a threshold. This abstraction captures frequency-type targets such as the proportion of time spent in a symptomatic, unsafe, or anomalous state. Unlike a temporal mean, an occupation label depends on the complete correlation structure after thresholding. It therefore exposes a general point: the statistical timescale of an input is defined jointly by the latent dynamics and the label functional.

We make three contributions. First, we prove a trait--state asymptotic decomposition for Gaussian labels formed by temporal aggregation. It separates an $O(1)$ cross-sectional channel from an $O(T^{-1})$ state channel and yields a nonzero snapshot-prediction limit. Second, we derive task-dependent effective temporal spans for mean and occupation labels; the latter depends on all higher-order correlation times, so matching the usual integral correlation time does not match temporal information. Third, we prove \emph{boundary localization}: for occupation labels, state-driven label variance is largest for individuals whose stable trait lies at the threshold, while local-window efficiency lacks the same multiplicative concentration. Experiments verify these results and directly compare same-time segmentation with dispersed temporal coverage under an equal segment budget.

\section{Related Work}
Generalizability theory decomposes object, occasion, rater, and measurement facets and uses D-studies to compare acquisition designs \citep{cronbach1972dependability,shavelson1991primer,brennan2001generalizability}. Latent state--trait theory likewise separates enduring traits from occasion-specific states \citep{steyer1999latent}. Variance-component and random-effects methods provide the corresponding estimation machinery \citep{searle1992variance,robinson1991blup}. Our model uses the same conceptual separation but makes the prediction target a nonlinear functional of a correlated continuous-time state.

Longitudinal and functional-data methods study covariance estimation, trajectory recovery, and measurement placement \citep{diggle2002longitudinal,ramsay2005functional,ji2017optimal}. Classical measurement-error theory distinguishes biological variation from noisy observation \citep{fuller1987measurement,carroll2006measurement}. Sampling design and pseudoreplication also warn that correlated repeats do not equal independent support \citep{kish1965survey,cochran1977sampling,hurlbert1984pseudoreplication,pyper1998autocorrelation}, while Bayesian design formalizes the value of choosing informative observations \citep{chaloner1995bayesian}. We instead derive the maximum predictive information of a fixed protocol and show that its temporal value depends on the label functional.

Weak supervision, learning from label proportions, and multiple-instance learning attach labels to bags rather than local instances \citep{dietterich1997multiple,quadrianto2009labels,zhou2018brief,ilse2018attention}. These frameworks do not by themselves determine what fraction of a long-horizon label is observable through a short temporal support. Our protocol ceiling is complementary: it bounds every learner using the same observations, independently of architecture or training-set size.

Occupation functionals and Gaussian level crossings have a substantial probability literature \citep{kratz2006level,adler2007random}; Hermite expansions and limit theory expose the role of higher-order correlations \citep{breuer1983central}. We use Plackett's Gaussian derivative identity for threshold covariances \citep{plackett1954reduction}. Discrete approximation of occupation functionals has sharp error theory \citep{altmeyer2018estimation,altmeyer2021approximation}, and Gaussian-process excursion-volume uncertainty and sequential design are well studied \citep{vazquez2006excursion,bect2012sequential,azzimonti2016quantifying,bect2019supermartingale}. We do not claim novelty for excursion-volume posterior inference or adaptive point placement. Our focus is multiple independent objects, a stable trait plus correlated state, sparse noisy windows, and supervised labels whose temporal meaning changes with the aggregation functional.

\section{Temporal-Aggregate Learning}
For object $i$, let
\begin{equation}
Z_i(t)=\sqrt{\alpha}\,M_i+\sqrt{1-\alpha}\,X_i(t),\qquad t\in[0,T],
\label{eq:model}
\end{equation}
where $M_i\sim\cN(0,1)$ is a stable trait, $X_i$ is a zero-mean unit-variance stationary Gaussian process, and the two are independent. Write
\begin{align}
\rho(u)&=\Cov\{X_i(t),X_i(t+u)\},\\
r_\alpha(u)&=\alpha+(1-\alpha)\rho(u).
\end{align}
Thus $Z_i(t)$ is marginally standard normal and $\alpha$ is its long-lag correlation. The unnormalized model $\mu_i+\sigma_sX_i(t)$ is equivalent after rescaling, with $\alpha=\sigma_\mu^2/(\sigma_\mu^2+\sigma_s^2)$.

For $g\in L^2(\phi)$, define the temporal-aggregate label
\begin{equation}
\Theta_{i,g,T}=\frac{1}{T}\int_0^T g\{Z_i(t)\}\dd t.
\label{eq:label}
\end{equation}
We use $g(z)=z$ for a mean label and $g_c(z)=\Ind\{z>c\}$ for an occupation-time label. A protocol $\pi$ contains noisy linear window observations $Y_{id}=L_d Z_i+\varepsilon_{id}$. A window of length $w_d$ centered at $t_d$ has $L_dZ=w_d^{-1}\int_{t_d-w_d/2}^{t_d+w_d/2}Z(u)\dd u$. We distinguish the number of time windows $D$, their support lengths $w_d$, and the number of segments $M$ used to estimate a fixed window. Increasing $M$ can reduce measurement noise, but does not change the observed temporal set. Their primary roles are
\[
\begin{aligned}
M&:\text{ precision}, & w&:\text{ local support},\\
(D,t_{1:D})&:\text{ temporal coverage}.&&
\end{aligned}
\]
Only $w$ and $(D,t_{1:D})$ change temporal support; $M$ refines measurement on support already observed.

Under squared loss, the optimal protocol risk and explainability are
\begin{equation}
R_\pi^*=\E\Var(\Theta_{g,T}\mid Y_\pi),\qquad
\cI_\pi=1-\frac{R_\pi^*}{\Var(\Theta_{g,T})}.
\label{eq:risk}
\end{equation}
The quantity $\cI_\pi$ is a protocol-level theoretical $R^2$, not the performance of a particular architecture. We also condition on $M_i$ to isolate the state channel, preventing high cross-sectional prediction from being mistaken for successful tracking of within-person dynamics.

\section{Protocol Risk as a Common Tool}
Let $(U,V_r)$ be standard bivariate normal with correlation $r$, and define $C_g(r)=\Cov\{g(U),g(V_r)\}$. For linear Gaussian observations, let $k_\pi(t)=\Cov\{Z(t),Y_\pi\}$, $V_\pi=\Var(Y_\pi)$, and
\begin{equation}
m_\pi(t)=k_\pi(t)^\top V_\pi^{-1}Y_\pi,\qquad
q_\pi(s,t)=k_\pi(s)^\top V_\pi^{-1}k_\pi(t).
\label{eq:qpi}
\end{equation}
Here $q_\pi$ is the covariance explained by the protocol; the posterior covariance is $r_\alpha(s-t)-q_\pi(s,t)$.

\begin{proposition}[Exact protocol-conditioned risk]
For any $g\in L^2(\phi)$,
\begin{equation}
R_\pi^*=\frac{1}{T^2}\int_0^T\!\int_0^T
\left[C_g\{r_\alpha(s-t)\}-C_g\{q_\pi(s,t)\}\right]\dd s\dd t.
\label{eq:riskidentity}
\end{equation}
The Bayes predictor is $T^{-1}\int_0^T\E[g\{Z(t)\}\mid Y_\pi]\dd t$.
\end{proposition}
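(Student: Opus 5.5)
The plan is to start from the law of total variance. Since $\Theta=\Theta_{g,T}$ is square integrable, the Bayes predictor under squared loss is $\hat\Theta=\E(\Theta\mid Y_\pi)$. The risk then satisfies
\[
R_\pi^*=\E\Var(\Theta\mid Y_\pi)=\Var(\Theta)-\Var\{\E(\Theta\mid Y_\pi)\}.
\]
Fubini's theorem for conditional expectations, justified because $\E\int_0^T|g\{Z(t)\}|\dd t\le T\|g\|_{L^2(\phi)}<\infty$, gives $\E(\Theta\mid Y_\pi)=T^{-1}\int_0^T\E[g\{Z(t)\}\mid Y_\pi]\dd t$. This is the stated predictor. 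It remains to compute the two variances as double integrals and to recognize the integrands.

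For the first term, stationarity and marginal standard normality of $Z$ make $(Z(s),Z(t))$ standard bivariate normal with correlation $r_\alpha(s-t)$. Fubini then gives
\[
\Var(\Theta)=T^{-2}\iint C_g\{r_\alpha(s-t)\}\dd s\dd t.
\]

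For the second term, write $h_t=\E[g\{Z(t)\}\mid Y_\pi]$. By Gaussian conditioning, $Z(t)=m_\pi(t)+\xi(t)$, where $\xi$ is independent of $Y_\pi$ and has covariance $r_\alpha-q_\pi$. In particular $\Var\, m_\pi(t)=q_\pi(t,t)=:a_t$ and $\Var\,\xi(t)=1-a_t$.

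The key step, and the one I expect to be the main obstacle, is the identity
\[
\Cov(h_s,h_t)=C_g\{q_\pi(s,t)\}.
\]
It is not immediate, because $h_t=(g*\phi_{1-a_t})(m_\pi(t))$ with $m_\pi(t)\sim\cN(0,a_t)$, so the scalings differ from a standard pair.

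I would introduce auxiliary variables. Let $\eta_s,\eta_t$ be mutually independent standard normals, independent of everything else, and set
\[
U=m_\pi(s)+\sqrt{1-a_s}\,\eta_s,\qquad V=m_\pi(t)+\sqrt{1-a_t}\,\eta_t.
\]
Then $U$ and $V$ are each standard normal, and their correlation is $\Cov\{m_\pi(s),m_\pi(t)\}=q_\pi(s,t)$. By construction $h_s=\E[g(U)\mid m_\pi]$ and $h_t=\E[g(V)\mid m_\pi]$. Conditionally on $m_\pi$, $U$ and $V$ are independent, so $\E[g(U)g(V)]=\E[h_sh_t]$. Together with $\E h_s=\E g(U)$ and $\E h_t=\E g(V)$, this gives $\Cov(h_s,h_t)=\Cov\{g(U),g(V)\}=C_g\{q_\pi(s,t)\}$.

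As a cross-check, the Hermite expansion gives the same result. Writing $C_g(r)=\sum_{k\ge1}c_k^2k!\,r^k$, Mehler's formula reduces $h_t$ to $\sum_k c_k a_t^{k/2}He_k\{m_\pi(t)/\sqrt{a_t}\}$, and the covariance of the Hermite terms is $q_\pi^k$. This version also handles the degenerate case $a_t=0$. I would, however, present the coupling argument.

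Finally, I would apply Fubini once more to write $\Var\{T^{-1}\int_0^T h_t\dd t\}=T^{-2}\iint C_g\{q_\pi(s,t)\}\dd s\dd t$. This is justified because $|C_g(\cdot)|\le\Var g(U)<\infty$ is bounded, and measurability in $(s,t)$ follows from the continuity of $q_\pi$ in its arguments. Subtracting this from $\Var(\Theta)$ yields \eqref{eq:riskidentity}.
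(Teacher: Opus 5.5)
Your proof is correct and follows essentially the same route as the paper. Your auxiliary pair $(U,V)$, built from $m_\pi$ plus independent noise, is exactly the paper's pair of conditionally independent posterior replicas $Z^{(1)}(s),Z^{(2)}(t)$; the law of total variance and (conditional) Fubini then finish the argument in both versions, and your integrability checks just make explicit what the paper leaves implicit.
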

\begin{proof}
Fubini's theorem gives
\[
\Var(\Theta_{g,T})=\frac{1}{T^2}\int_0^T\!\int_0^T
C_g\{r_\alpha(s-t)\}\dd s\dd t.
\]
Now draw posterior-process replicas $Z^{(1)}$ and $Z^{(2)}$ independently conditional on $Y_\pi$. Their unconditional marginals equal that of $Z$, while the Gaussian conditioning formula gives
\begin{align}
\Cov\{Z^{(1)}(s),Z^{(2)}(t)\}
&=\Cov\big\{\E(Z(s)\mid Y_\pi),\notag\\
&\hspace{3.4em}\E(Z(t)\mid Y_\pi)\big\}\notag\\
&=q_\pi(s,t).
\end{align}
Conditional independence implies
\begin{align}
\E[g\{Z^{(1)}(s)\}g\{Z^{(2)}(t)\}]
&=\E\big[\E\{g(Z(s))\mid Y_\pi\}\notag\\
&\hspace{2.7em}\cdot\E\{g(Z(t))\mid Y_\pi\}\big],
\end{align}
so the covariance of the two posterior means is $C_g\{q_\pi(s,t)\}$. Integrating yields
\[
\Var\{\E(\Theta_{g,T}\mid Y_\pi)\}
=T^{-2}\!\int_0^T\!\!\int_0^T C_g\{q_\pi(s,t)\}\dd s\dd t.
\]
The law of total variance subtracts this quantity from $\Var(\Theta_{g,T})$, proving Eq.~\eqref{eq:riskidentity}; conditional Fubini gives the stated predictor.
\end{proof}
For occupation labels, $C_g(r)=G_c(r)=\Pr(U>c,V_r>c)-\PhiBar(c)^2$; at $c=0$, $G_0(r)=\arcsin(r)/(2\pi)$.

\begin{remark}[Segments are useful, but they are not time]
Let $\cF_M$ be the information generated by an increasingly fine segmentation of a fixed observed set $W$. If $\cF_M\uparrow\sigma\{Z(t):t\in W\}$, L\'evy's upward theorem gives
\[
R_M^*\downarrow\E\Var\!\left(\Theta_{g,T}\mid\sigma\{Z(t):t\in W\}\right).
\]
More segments can reduce sensor noise and improve trait estimation, so the plateau may be lower than the risk of a coarse recording. They cannot reveal state innovations outside $W$; the title's distinction therefore concerns temporal state coverage, not the usefulness of repeated measurements for precision.
\end{remark}

\section{Protocol Ceilings and Learning Gaps}
For any measurable predictor $f(Y_\pi)$, the orthogonality of conditional expectation gives the exact decomposition
\begin{equation}
\E\{\Theta_{g,T}-f(Y_\pi)\}^2
=R_\pi^*+\E\{f(Y_\pi)-f_\pi^*(Y_\pi)\}^2.
\label{eq:modelgap}
\end{equation}
Equivalently, its population coefficient of determination satisfies
\begin{equation}
R_\pi^2(f)=\cI_\pi-
\frac{\E\{f(Y_\pi)-f_\pi^*(Y_\pi)\}^2}{\Var(\Theta_{g,T})}.
\label{eq:r2gap}
\end{equation}
The first term is fixed by acquisition; only the second can be reduced by architecture, optimization, or more training objects. This separates two empirically similar forms of saturation: a large model gap under an informative protocol, and a small model gap near a low protocol ceiling. When $\cI_\pi$ can be estimated from a calibrated temporal model, $R_\pi^2(f)/\cI_\pi$ is a descriptive ceiling-utilization ratio; it is meaningful only for the same target, loss, and protocol assumptions.

Cross-sectional and monitoring benchmarks also answer different questions. Write
\[
\Theta_{g,T}=h_g(M)+\Delta_{g,T},\qquad \E(\Delta_{g,T}\mid M)=0.
\]
Total explainability includes prediction of the stable $h_g(M)$ channel. A trait-conditioned state ceiling instead evaluates how much of $\Delta_{g,T}$ is recoverable from temporal windows. High total $R^2$ can therefore coexist with weak within-person tracking; reporting only the former can make protocol-limited state learning look like successful dynamics modeling.

\section{Trait--State Limits}
Center $g$ and expand it in probabilists' Hermite polynomials:
\begin{align}
g(z)-\E g(U)&=\sum_{k\ge1}\frac{a_k(g)}{k!}H_k(z),\\
C_g(r)&=\sum_{k\ge1}\frac{a_k(g)^2}{k!}r^k.
\label{eq:hermite}
\end{align}
Let $\tau_j=\int_0^\infty\rho(u)^j\dd u$.

\begin{theorem}[Trait--state decomposition]
Assume $0\le\rho\le1$ and the series below is finite. Then
\begin{equation}
\Var(\Theta_{g,T})=V_{\rm trait}(g)+\frac{A_{\rm state}(g)}{T}+o(T^{-1}),
\label{eq:decomp}
\end{equation}
where
\begin{align}
V_{\rm trait}(g)&=C_g(\alpha)
=\Var\!\left[\E\{g(Z(t))\mid M\}\right],\\
A_{\rm state}(g)&=2\int_0^\infty
\left[C_g\{\alpha+(1-\alpha)\rho(u)\}-C_g(\alpha)\right]\dd u\\
&=2\sum_{k\ge1}\frac{a_k(g)^2}{k!}
\sum_{j=1}^k \binom{k}{j}\alpha^{k-j}(1-\alpha)^j\tau_j.
\label{eq:astate}
\end{align}
\end{theorem}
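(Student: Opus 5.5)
We start from the Fubini identity already used in the proof of the exact protocol-conditioned risk, $\Var(\Theta_{g,T})=T^{-2}\int_0^T\!\int_0^T C_g\{r_\alpha(s-t)\}\dd s\dd t$, and reduce the double integral to a single lag integral by stationarity:
\[
\Var(\Theta_{g,T})=\frac{2}{T}\int_0^T\Big(1-\frac{u}{T}\Big)C_g\{r_\alpha(u)\}\dd u .
\]
The constant $C_g(\alpha)$ integrates against the triangular weight to exactly $C_g(\alpha)$, since $\frac{2}{T}\int_0^T(1-u/T)\dd u=1$. Writing $D(u)=C_g\{r_\alpha(u)\}-C_g(\alpha)$, this gives the exact identity
\[
\Var(\Theta_{g,T})=C_g(\alpha)+\frac{2}{T}\int_0^T\Big(1-\frac{u}{T}\Big)D(u)\dd u .
\]

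The interpretation $C_g(\alpha)=\Var[\E\{g(Z(t))\mid M\}]$ comes from Mehler's formula. For fixed $t$ and two independent state copies $X,X'$, the variables $\sqrt\alpha M+\sqrt{1-\alpha}X$ and $\sqrt\alpha M+\sqrt{1-\alpha}X'$ are standard bivariate normal with correlation $\alpha$. They are conditionally independent given $M$, so their covariance equals $\Var[\E\{g(Z)\mid M\}]$. This is the same replica argument as in the earlier proposition, now with $Y_\pi=M$.

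Next we expand $D$ using the Hermite series in Eq.~\eqref{eq:hermite}:
\[
D(u)=\sum_{k\ge1}\frac{a_k^2}{k!}\big[(\alpha+(1-\alpha)\rho)^k-\alpha^k\big]=\sum_{k\ge1}\frac{a_k^2}{k!}\sum_{j=1}^k\binom kj\alpha^{k-j}(1-\alpha)^j\rho(u)^j .
\]
The hypothesis $0\le\rho\le1$ makes every term nonnegative, so $D\ge0$. Tonelli's theorem then lets us interchange the sums with $\int_0^\infty$. This yields $2\int_0^\infty D(u)\dd u$ equal to the stated double series in $\tau_j$, which is finite by assumption. So $D\in L^1(0,\infty)$, and the two expressions for $A_{\rm state}(g)$ agree.

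It remains to show $\frac{2}{T}\int_0^T(1-u/T)D(u)\dd u=A_{\rm state}/T+o(T^{-1})$. Equivalently, we need $\int_0^T(u/T)D(u)\dd u\to0$ together with $\int_T^\infty D\to0$. The second limit is immediate from integrability. For the first, we apply dominated convergence: $(u/T)\Ind\{u\le T\}D(u)\le D(u)$, and the integrand tends to $0$ pointwise as $T\to\infty$.

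The main obstacle is justifying the interchanges. This means confirming that the Hermite series for $C_g$ converges at $r=r_\alpha(u)\in[\alpha,1]$ (it does, since $\sum a_k^2/k!=\Var g(U)<\infty$ and $|r|\le1$), and that finiteness of the series is exactly what gives integrability of $D$. Nonnegativity from $\rho\ge0$ makes Tonelli applicable without further conditions. Without that sign condition one would need absolute summability instead, which is why the assumption is imposed.
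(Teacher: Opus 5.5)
Your proof is correct and follows the paper's argument essentially step for step. Both reduce by stationarity to a triangular-weighted lag integral, subtract $C_g(\alpha)$ exactly, apply dominated convergence for the $T^{-1}$ coefficient, and use the Hermite--binomial expansion with Tonelli justified by $0\le\rho\le1$. The only small deviation is how you obtain $C_g(\alpha)=\Var[\E\{g(Z(t))\mid M\}]$: you use a conditional-replica argument (two state copies sharing $M$ have correlation $\alpha$), whereas the paper uses $\E[H_k\{\sqrt\alpha M+\sqrt{1-\alpha}X\}\mid M]=\alpha^{k/2}H_k(M)$ with Hermite orthogonality; both are valid.
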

\begin{proof}
Stationarity and symmetry give
\[
\Var(\Theta_{g,T})=\frac{2}{T^2}\int_0^T(T-u)C_g\{r_\alpha(u)\}\dd u.
\]
Writing $\Delta_g(u)=C_g\{r_\alpha(u)\}-C_g(\alpha)$ separates this as
\[
C_g(\alpha)+\frac{2}{T}\int_0^\infty
\Ind\{u\le T\}(1-u/T)\Delta_g(u)\dd u.
\]
The assumed summability makes $\Delta_g$ integrable, so dominated convergence gives $A_{\rm state}(g)/T+o(T^{-1})$. Substituting Eq.~\eqref{eq:hermite}, expanding
$[\alpha+(1-\alpha)\rho(u)]^k-\alpha^k$, and applying Tonelli's theorem gives Eq.~\eqref{eq:astate}. Finally,
\[
\E\!\left[H_k\{\sqrt\alpha M+\sqrt{1-\alpha}X\}\mid M\right]
=\alpha^{k/2}H_k(M),
\]
and Hermite orthogonality yields
$\Var[\E\{g(Z(t))\mid M\}]=\sum_{k\ge1}a_k(g)^2\alpha^k/k!=C_g(\alpha)$.
\end{proof}
The first term is a stable $O(1)$ cross-sectional channel; the second is finite-horizon state variation. For one noisy point observation $Y=Z(T/2)+\varepsilon$, $\varepsilon\sim\cN(0,\nu^2)$,
\begin{equation}
\lim_{T\to\infty}\cI_Y=
\frac{C_g\{\alpha^2/(1+\nu^2)\}}{C_g(\alpha)}>0,
\qquad \alpha>0.
\label{eq:snapshotlimit}
\end{equation}
Indeed, $q_Y(s,t)=r_\alpha(|s-T/2|)r_\alpha(|t-T/2|)/(1+\nu^2)$ approaches $\alpha^2/(1+\nu^2)$ away from a vanishing boundary fraction. Applying the exact risk identity and Ces\`aro convergence proves Eq.~\eqref{eq:snapshotlimit}.
Without a trait, a fixed snapshot explains a vanishing fraction of an increasingly long label. With a trait, cross-sectional prediction retains an $O(1)$ channel, so apparent benchmark performance can remain substantial without learning temporal state dynamics.

\begin{corollary}[What $D$ and $M$ buy in the trait channel]
For a mean label as $T\to\infty$, suppose $D$ occasions are separated enough that their state terms are independent, and each occasion averages $M$ segments with raw segment-noise variance $\sigma_\varepsilon^2$. Then the trait explainability is
\begin{equation}
\cI_{\rm trait}(D,M)=
\frac{\alpha}{\alpha+(1-\alpha)/D+\sigma_\varepsilon^2/(DM)}.
\label{eq:traitdm}
\end{equation}
\end{corollary}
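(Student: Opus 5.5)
The plan is to reduce the statement to a scalar Gaussian regression. As $T\to\infty$ the mean label $\Theta_{g,T}$ with $g(z)=z$ satisfies $\Theta_{z,T}=\sqrt\alpha M+\sqrt{1-\alpha}\,T^{-1}\int_0^T X(t)\dd t$. By the Trait--state decomposition with $g(z)=z$, $C_g(r)=r$, so $\Var(\Theta)=\alpha+O(T^{-1})$, and the limiting label is the trait component $\sqrt\alpha M$ (variance $\alpha$). I will define $\cI_{\rm trait}(D,M)$ as the limiting explainability $\lim_{T\to\infty}\cI_\pi$. Equivalently, it is the fraction of $\Var(\sqrt\alpha M)$ explained by the data.

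Next I model the data. Occasion $d$ yields the $M$-segment average
\[
\bar Y_d=\sqrt\alpha M+\sqrt{1-\alpha}\,S_d+\bar\varepsilon_d .
\]
Here $S_d$ is the standardized state term of that occasion, and $\bar\varepsilon_d$ has variance $\sigma_\varepsilon^2/M$. By hypothesis the $S_d$ are independent across occasions; I take each to be $\cN(0,1)$, which holds for a point or short-window state at unit variance. Noise is independent of everything, and $\Cov(\bar Y_d,\Theta)\to\alpha$ because the long-horizon state average decorrelates from any fixed-support observation as $T\to\infty$. This last fact is the Cesàro argument already used for the snapshot limit in Eq.~\eqref{eq:snapshotlimit}: $q_\pi(s,t)$ tends to its trait value except on a vanishing fraction of $[0,T]^2$.

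Then I compute the Bayes risk. The vector $(\bar Y_1,\dots,\bar Y_D)$ is exchangeable Gaussian with covariance
\[
\alpha\mathbf 1\mathbf 1^\top+\{(1-\alpha)+\sigma_\varepsilon^2/M\}I .
\]
Its sufficient statistic for $M$ is the grand mean $\bar Y=D^{-1}\sum_d\bar Y_d$. Sufficiency follows from symmetry, or directly from the Sherman--Morrison form of $V_\pi^{-1}$. The grand mean has variance $\alpha+(1-\alpha)/D+\sigma_\varepsilon^2/(DM)$ and covariance $\alpha$ with the limiting label. For linear Gaussian data the explained variance is $\alpha^2/\Var(\bar Y)$, so dividing by the label variance $\alpha$ gives Eq.~\eqref{eq:traitdm}.

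The main obstacle is making the limit rigorous. I need $\lim_T \cI_\pi$ to equal this finite-dimensional calculation, which means showing that the $O(T^{-1})$ state contribution and the boundary covariance terms vanish uniformly. I would use the exact risk identity with $C_g(r)=r$: $R_\pi^*/\Var(\Theta)$ becomes the average of $r_\alpha-q_\pi$ over $[0,T]^2$. Here $q_\pi(s,t)\to\alpha^2\mathbf 1^\top V_\pi^{-1}\mathbf 1$ whenever $s$ and $t$ are both far from the observed windows, and $\rho$ is integrable. Dominated convergence then finishes the argument.
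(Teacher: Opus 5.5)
Your proposal is correct and follows the paper's route: reduce the data to the grand mean $\bar Y$, whose independent components have variances $\alpha$, $(1-\alpha)/D$ and $\sigma_\varepsilon^2/(DM)$, then compute the Gaussian regression $R^2$ for predicting the trait $\sqrt\alpha M_i$ from it. Your two added steps are sound and fill gaps the paper leaves implicit, since it simply takes the limiting trait $\sqrt\alpha M_i$ as the target and works with $\bar Y$ directly: sufficiency of the grand mean follows from the exchangeable covariance of the occasion averages, and the $T\to\infty$ reduction follows from the exact risk identity with $C_g(r)=r$.
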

\begin{proof}
The protocol average is
\[
\bar Y=\sqrt\alpha M_i+\sqrt{1-\alpha}\,\bar X_D+\bar\varepsilon.
\]
The independent terms have variances $\alpha$, $(1-\alpha)/D$, and
$\sigma_\varepsilon^2/(DM)$. The Gaussian regression $R^2$ for predicting
$\sqrt\alpha M_i$ from $\bar Y$ is
$\Cov(\sqrt\alpha M_i,\bar Y)^2/
[\Var(\sqrt\alpha M_i)\Var(\bar Y)]$, giving Eq.~\eqref{eq:traitdm}.
\end{proof}
Same-time replication corresponds to $D=1$: it removes measurement noise as $M$ grows but plateaus at $\alpha$. Temporally separated occasions also average transient state noise and can approach unit trait explainability. Thus more segments are useful for precision, while more time supplies an additional source of information.

Importantly, Eq.~\eqref{eq:traitdm} does \emph{not} require the short-lag state kernel. After standardization, a conventional two-occasion test--retest design at a lag where the transient state correlation is negligible identifies $\alpha$ from cross-occasion covariance and $\sigma_\varepsilon^2$ from observed variance (or from within-occasion segments). Hence any suitable repeated-measurement dataset can already produce the trait-channel ceiling. Estimating $\rho$ and $\tau_{k\ge2}$ is needed only for the state-channel quantities below.

For mean labels, $V_{\rm trait}=\alpha$ and $A_{\rm state}=2(1-\alpha)\tau_1$. For occupation labels, every Hermite order contributes, making the state timescale task dependent. For an OU state kernel $\rho(u)=e^{-u/\tau}$, $\tau_j=\tau/j$. Conditional on a trait value, an occupation label has standardized state threshold $a$ and
\begin{equation}
A_a=2\tau\int_0^1\frac{G_a(r)}{r}\dd r,
\qquad
A_0=\frac{\tau\log 2}{2}.
\label{eq:ouclosed}
\end{equation}
The closed form at the boundary independently matches the Hermite series.

\section{Task-Dependent Effective Time}
To isolate temporal information beyond the stable trait, condition on $M=m$. Define
\[
h_g(m)=\E_X\!\left[g\{\sqrt\alpha m+\sqrt{1-\alpha}X(t)\}\right]
\]
and subtract this trait-conditional mean. For an occupation label, set $a=(c-\sqrt\alpha m)/\sqrt{1-\alpha}$. Let $W_w$ be a standardized noisy state average over a window of length $w$, let $r_w(t)=\Corr\{X(t),W_w\}$, and define $J_k(w)=\int_{\R}r_w(t)^k\dd t$.

\begin{theorem}[Task-dependent state-effective span]
Suppose the window remains interior as $T\to\infty$, $r_w^k\in L^1(\R)$
for every Hermite order with nonzero weight, and the weighted $\tau_k$ and
$J_k(w)^2$ series below are finite. Then
\begin{equation}
\cI_{\rm state}(w\mid m)=\frac{\ell_g(w;m)}{T}+o(T^{-1}).
\label{eq:effective}
\end{equation}
For a mean label,
\begin{align}
\ell_{\rm mean}(w)&=\frac{2\tau_1}{\eta(w)+\nu^2},\\
\eta(w)&=\frac{2}{w^2}\int_0^w(w-u)\rho(u)\dd u.
\label{eq:ellmean}
\end{align}
For an occupation label,
\begin{equation}
\ell_a(w)=
\frac{\displaystyle\sum_{k\ge1}\frac{H_{k-1}(a)^2}{k!}J_k(w)^2}
{\displaystyle2\sum_{k\ge1}\frac{H_{k-1}(a)^2}{k!}\tau_k}.
\label{eq:ellocc}
\end{equation}
\end{theorem}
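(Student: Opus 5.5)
Conditional on $M=m$, the label minus $h_g(m)$ is a temporal aggregate of $\tilde g(x)=g\{\sqrt\alpha m+\sqrt{1-\alpha}x\}$ applied to the unit-variance stationary process $X$. Both the numerator and the denominator of $\cI_{\rm state}$ can then be computed with tools already in place. The denominator follows from the Trait--state decomposition with $\alpha=0$. The numerator follows from the exact protocol-conditioned risk proposition, applied to the single standardized observation $W_w$.

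\textbf{Hermite coefficients.} First I would expand $\tilde g$ in Hermite polynomials of $x$ with coefficients $b_k$. For the mean label, $\tilde g(x)-h_g(m)=\sqrt{1-\alpha}\,x$, so only $b_1=\sqrt{1-\alpha}$ is nonzero. For the occupation label, $\tilde g(x)=\Ind\{x>a\}$. The standard formula $\E[\Ind\{U>a\}H_k(U)]=\phi(a)H_{k-1}(a)$ gives $b_k=\phi(a)H_{k-1}(a)$, hence $b_k^2/k!=\phi(a)^2H_{k-1}(a)^2/k!$. The factor $\phi(a)^2$ and the scale $1-\alpha$ cancel in the ratio, and the weights $H_{k-1}(a)^2/k!$ in Eq.~\eqref{eq:ellocc} appear.

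\textbf{Denominator.} Theorem~1 with $\alpha=0$ gives $\Var(\Theta\mid M=m)=2T^{-1}\sum_k (b_k^2/k!)\tau_k+o(T^{-1})$. This equals $2(1-\alpha)\tau_1/T$ for the mean label.

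\textbf{Numerator.} By the Bayes-risk identity, the explained variance is $T^{-2}\iint C_{\tilde g}\{q(s,t)\}\dd s\dd t$. Here the protocol is the scalar $W_w$, so $q(s,t)=r_w(s-t_d)r_w(t-t_d)$, and $C_{\tilde g}(q)=\sum_k (b_k^2/k!)\,r_w(s-t_d)^k r_w(t-t_d)^k$. Each term factorizes into $\bigl(\int_0^T r_w(s-t_d)^k\dd s\bigr)^2$. Because the window is interior and $r_w^k\in L^1$, each factor converges to $J_k(w)$. The numerator is therefore $T^{-2}\sum_k (b_k^2/k!)J_k(w)^2\,(1+o(1))$, where the interchange of sum and limit is justified by dominated convergence under the assumed summability of $\sum_k (b_k^2/k!)J_k^2$. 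Since $|r_w|\le1$, the truncated integrals are bounded by $J$-type integrals of $|r_w|^k$ and the dominating bound holds.

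\textbf{Ratio and mean-label check.} The ratio of the two expansions is $\ell_g/T+o(T^{-1})$, which gives Eq.~\eqref{eq:ellocc}. For the mean label I would verify Eq.~\eqref{eq:ellmean} by computing $J_1(w)$ directly. Write $W_w=(L_wX+\varepsilon)/\sqrt{\eta(w)+\nu^2}$, with $\eta(w)=\Var(L_wX)$. Then $\int\Cov\{X(t),L_wX\}\dd t=\int\rho=2\tau_1$, since averaging a kernel over the window preserves its integral. This gives $J_1=2\tau_1/\sqrt{\eta+\nu^2}$, and so $\ell=J_1^2/(2\tau_1)=2\tau_1/(\eta+\nu^2)$. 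The scale $\nu^2$ here is interpreted as noise relative to the state variance, consistent with the standardization.

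\textbf{Main obstacle.} The hardest step is making the interchange of the infinite Hermite sum with the $T\to\infty$ limit rigorous for the occupation label, where $g$ is not smooth. The conditions needed are integrability of $r_w^k$ and of $\rho^k$, the latter entering through the denominator via Theorem~1. I would handle this by bounding the $k$-th term uniformly in $T$ by $(b_k^2/k!)J_k^{|\cdot|}(w)^2$ and invoking the stated finiteness hypotheses. If $r_w$ can be negative, I would replace $J_k$ by $\int|r_w|^k$ in the dominating bound.
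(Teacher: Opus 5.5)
Your proposal is correct and follows essentially the paper's argument: you use the conditional Hermite expansion with $b_k=\phi(a)H_{k-1}(a)$, the $\alpha=0$ case of the trait--state asymptotic for the denominator, the factorized series $T^{-2}\sum_k (b_k^2/k!)\{\int_0^T r_w^k\}^2$ with termwise passage to $J_k(w)$ for the explained variance, and $J_1(w)=2\tau_1/\sqrt{\eta(w)+\nu^2}$ for the mean label. The only difference is presentational. You obtain the numerator from the exact risk identity with rank-one $q(s,t)=r_w(s-t_d)r_w(t-t_d)$ and Mehler's formula, whereas the paper uses $\E\{H_k(X(t))\mid W_w\}=r_w(t)^kH_k(W_w)$ and Hermite orthogonality; the two computations are equivalent.
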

\begin{proof}
Conditional on $M=m$, set
$g_m(x)=g(\sqrt\alpha m+\sqrt{1-\alpha}x)$ and expand
\[
g_m\{X(t)\}-\E g_m(U)
=\sum_{k\ge1}\frac{\beta_k(m)}{k!}H_k\{X(t)\}.
\]
Hermite orthogonality gives
\begin{align}
\Var(\Theta_{g,T}\mid M=m)
&=\frac{A_g(m)}{T}+o(T^{-1}),\\
A_g(m)&=2\sum_{k\ge1}\frac{\beta_k(m)^2}{k!}\tau_k.
\end{align}
Because $W_w$ is standardized, Gaussian regression gives
\[
\E\{H_k(X(t))\mid W_w\}=r_w(t)^kH_k(W_w).
\]
Therefore
\begin{align}
\Var\{\E(\Theta_{g,T}\mid W_w,M=m)\}
&=\frac{1}{T^2}\sum_{k\ge1}\frac{\beta_k(m)^2}{k!}\notag\\
&\quad{}\times
\left\{\int_0^T r_w(t)^k\dd t\right\}^{\!2}.
\end{align}
For an interior window with integrable correlation-profile tails, the truncated integral converges to $J_k(w)$; termwise convergence follows from the stated summability. Dividing explained variance by total state variance proves Eq.~\eqref{eq:effective}. For an occupation label,
$\beta_k(m)=\phi(a)H_{k-1}(a)$, which gives Eq.~\eqref{eq:ellocc}. For a mean label only $k=1$ remains. Since
$J_1(w)=2\tau_1/\sqrt{\eta(w)+\nu^2}$, the ratio
$J_1(w)^2/(2\tau_1)$ gives Eq.~\eqref{eq:ellmean}.
\end{proof}

For mean labels, only the first Hermite order remains. Occupation labels use all $\tau_k$ and all window profiles $J_k$; the common $\phi(a)^2$ factor cancels from Eq.~\eqref{eq:ellocc}. Thus physical duration $w$ is not the statistical value of a window, and two kernels with equal $\tau_1$ can still define different occupation-time information. For a noise-free OU window, $\ell_{\rm mean}(w)=w^2/[w-\tau(1-e^{-w/\tau})]$, approaching $2\tau$ for $w\ll\tau$ and $w$ for $w\gg\tau$.

\begin{corollary}[Equal segment budget]
Write $\ell_g(w;\nu^2,m)$ when the averaged window has noise variance $\nu^2$. Allocate $N$ independent raw segments either to one fixed window or to $N$ mutually separated windows. In the sparse long-horizon regime,
\begin{align}
\cI_{\rm same}(N\mid m)
&=\frac{\ell_g(w;\sigma_\varepsilon^2/N,m)}{T}+o(T^{-1})\notag\\
&\longrightarrow\frac{\ell_g(w;0,m)}{T},\label{eq:samebudget}\\
\cI_{\rm dispersed}(N\mid m)
&=\frac{N\ell_g(w;\sigma_\varepsilon^2,m)}{T}+o(N/T),\label{eq:dispersedbudget}
\end{align}
until the sparse additivity approximation approaches saturation.
\end{corollary}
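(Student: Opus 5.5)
The plan is to reduce both parts of the Corollary to the task-dependent state-effective span theorem (Eq.~\eqref{eq:effective}), applied under the exact protocol-risk identity (Eq.~\eqref{eq:riskidentity}) conditioned on $M=m$.

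\textbf{Same-time allocation.} Averaging $N$ independent raw segments over one fixed window $W_w$ yields a single window observation. Its averaged noise variance is $\sigma_\varepsilon^2/N$. Since the noise-weighted average is a sufficient statistic for the window mean under Gaussian noise, conditioning on all $N$ segments is equivalent to conditioning on this one averaged observation. The state-effective span theorem then applies verbatim with $\nu^2=\sigma_\varepsilon^2/N$, giving the first equality in Eq.~\eqref{eq:samebudget}. For the limit, note that $\nu^2$ enters $\ell_g$ only through the correlation profile $r_w(t)=\Cov\{X(t),L_wX\}/\sqrt{\eta(w)+\nu^2}$. Hence every $J_k(w;\nu^2)=(\eta(w)+\nu^2)^{-k/2}\int(\Cov\{X(t),L_wX\})^k\dd t$ is monotone and continuous in $\nu^2$. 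Dominated convergence in the Hermite series of Eq.~\eqref{eq:ellocc} (or directly in Eq.~\eqref{eq:ellmean}) gives $\ell_g(w;\sigma_\varepsilon^2/N,m)\to\ell_g(w;0,m)$ as $N\to\infty$. This is the plateau: segments refine precision only on the support already observed, consistent with the Remark on L\'evy's upward theorem.

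\textbf{Dispersed allocation.} With $N$ mutually separated windows, each carries noise $\sigma_\varepsilon^2$. I would compute the explained variance $T^{-2}\iint C_{g_m}\{q_\pi(s,t)\}\dd s\dd t$ from the risk identity. Let $V_\pi$ be the covariance of the $N$ window observations; in the sparse regime, the separation makes its off-diagonal entries vanish. Then $V_\pi\approx\mathrm{diag}(\eta+\sigma_\varepsilon^2)$, so $q_\pi(s,t)\approx\sum_d r_{w,d}(s)r_{w,d}(t)$, where $r_{w,d}$ is the correlation profile centred at $t_d$.

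Next I would expand $C_{g_m}$ in Hermite polynomials. Because the profiles $r_{w,d}$ and $r_{w,d'}$ have essentially disjoint effective supports, $q_\pi^k\approx\sum_d r_{w,d}(s)^k r_{w,d}(t)^k$ up to cross terms. Each diagonal term integrates exactly as in the single-window theorem, yielding $J_k(w)^2$. Summing over $d$ and dividing by the state variance $A_g(m)/T$ gives $N\ell_g(w;\sigma_\varepsilon^2,m)/T$. The cross terms and the off-diagonal corrections to $V_\pi^{-1}$ are controlled by the separation and are absorbed in $o(N/T)$. The qualifier ``until saturation'' reflects that this additivity needs $N\ell_g/T$ small, so that windows do not overlap on the scale of the correlation profiles. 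Once that fails, the boundedness $\cI\le1$ takes over.

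\textbf{Main obstacle.} I expect the hardest step to be making the sparse additivity rigorous with uniform control in $N$. Two kinds of error must be bounded. The first is the tails of $r_{w,d}$ overlapping other windows, which affects all Hermite orders through mixed products in $q_\pi^k$. The second is the inverse-covariance corrections. I would first try a separation condition $\min|t_d-t_{d'}|\to\infty$ together with the integrability of $r_w^k$. Then I would bound cross terms via $|q_\pi|^k\le\dots$ using the summable Hermite weights and dominated convergence, treating $N$ as fixed or slowly growing relative to $T$.
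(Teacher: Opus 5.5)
Your proposal is correct and follows essentially the paper's route. The same-time case reduces to the state-effective span theorem (Theorem~2) with window noise $\sigma_\varepsilon^2/N$, plus continuity in the noise variance. The dispersed case treats cross-window covariances and cross terms as negligible, so that $N$ per-window contributions $B_g(w,m)/T^2$ add before dividing by $A_g(m)/T$.

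Two of your steps make explicit what the paper only asserts. These are the sufficiency of the segment average and the $(\eta(w)+\nu^2)^{-k/2}$ scaling of $J_k$, which gives the $N\to\infty$ limit by dominated convergence. Your sparse-additivity step, as you note yourself, stays at the same heuristic level of rigor as the paper's.
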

\begin{proof}
For same-time segments, averaging $N$ independent sensor errors changes only the local noise variance from $\sigma_\varepsilon^2$ to $\sigma_\varepsilon^2/N$; Theorem~2 gives Eq.~\eqref{eq:samebudget}. For separated windows, cross-window covariance terms are negligible in the sparse regime, so their explained state variances add to first order. Summing $N$ equal contributions and dividing by $A_g(m)/T$ gives Eq.~\eqref{eq:dispersedbudget}.
\end{proof}
Thus repeated segments can exhaust local measurement noise but have a finite state-information limit; separated windows purchase additional state support. For a mean label and point-like windows, the two expressions reduce to
\[
\frac{2\tau_1}{T(1+\sigma_\varepsilon^2/N)}
\quad\text{and}\quad
\frac{2N\tau_1}{T(1+\sigma_\varepsilon^2)},
\]
respectively.

\begin{theorem}[Boundary localization]
Assume $\rho(u)\ge0$. For the trait-conditioned threshold $a$,
\begin{equation}
G_a(r)=\int_0^r\frac{\exp\{-a^2/(1+s)\}}{2\pi\sqrt{1-s^2}}\dd s.
\label{eq:plackett}
\end{equation}
Hence $A_a=2\int_0^\infty G_a\{\rho(u)\}\dd u$ is even and strictly decreases with $|a|$ whenever $\rho$ is positive on a set of nonzero measure; it is maximized at $a=0$. The factor $\phi(a)^2$ cancels from Eq.~\eqref{eq:ellocc}, so $\ell_a(w)$ varies only through reweighting of Hermite orders.
\end{theorem}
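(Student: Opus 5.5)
The proof rests on Plackett's identity for the derivative of a bivariate normal orthant probability with respect to its correlation. For $(U,V_s)$ standard bivariate normal with correlation $s\in(-1,1)$, with joint density $\phi_2(x,y;s)$, we have $\partial_s\phi_2=\partial_x\partial_y\phi_2$. Hence
\[
\frac{\partial}{\partial s}\Pr(U>a,V_s>a)=\phi_2(a,a;s)=\frac{1}{2\pi\sqrt{1-s^2}}\exp\!\left\{-\frac{a^2}{1+s}\right\},
\]
where the last step uses $(2a^2-2sa^2)/\{2(1-s^2)\}=a^2/(1+s)$. At $s=0$ the variables are independent, so $G_a(0)=\PhiBar(a)^2-\PhiBar(a)^2=0$. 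Integrating the derivative from $0$ to $r$ gives Eq.~\eqref{eq:plackett} for $r\in[0,1)$. The endpoint $r=1$ follows by continuity, because the integrand has only an integrable $(1-s)^{-1/2}$ singularity.

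Next we identify $A_a$. Conditional on $M=m$, the occupation label is the occupation time of $X$ above $a$. Applying the decomposition of Theorem~1 with $\alpha=0$ (equivalently, the proof of Theorem~2) gives $A_a=2\int_0^\infty G_a\{\rho(u)\}\dd u$, provided this integral is finite. That holds under the summability already assumed, since $0\le G_a(r)\le G_0(r)=\arcsin(r)/(2\pi)$ by the monotonicity shown below.

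For the shape in $a$, the integrand in Eq.~\eqref{eq:plackett} depends on $a$ only through $a^2$, so $G_a$, and hence $A_a$, is even. For $s\in[0,1)$ the map $a\mapsto\exp\{-a^2/(1+s)\}$ strictly decreases in $|a|$. Therefore, for each $r>0$, $G_a(r)$ strictly decreases in $|a|$ because the integrand is strictly smaller on a set of positive length. At $r=0$ we have $G_a(0)=0$ for every $a$. Now take $|a|<|b|$. Then $G_a\{\rho(u)\}-G_b\{\rho(u)\}\ge0$ for all $u$, with strict inequality on $\{u:\rho(u)>0\}$. If that set has positive measure, integrating gives $A_a>A_b$. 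In particular $A_a$ is maximized at $a=0$.

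For the final assertion, recall from the proof of Theorem~2 that $\beta_k(m)=\phi(a)H_{k-1}(a)$. Then $\beta_k(m)^2=\phi(a)^2H_{k-1}(a)^2$ enters both the numerator and the denominator of Eq.~\eqref{eq:ellocc}. This common factor cancels, so $a$ enters $\ell_a(w)$ only through the relative weights $H_{k-1}(a)^2/k!$ across Hermite orders.

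The main obstacle is Plackett's heat-equation identity $\partial_s\phi_2=\partial_x\partial_y\phi_2$, which is needed to differentiate the orthant probability. I would cite it from \citet{plackett1954reduction}, or verify it through the characteristic function $\exp\{-(\xi^2+\eta^2+2s\xi\eta)/2\}$, where differentiating in $s$ produces the factor $-\xi\eta$, which is the Fourier symbol of $\partial_x\partial_y$. Two routine points remain: justifying differentiation under the integral for $s<1$, and the continuity argument at $r=1$.
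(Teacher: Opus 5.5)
Your proof is correct and follows the paper's route:
\begin{itemize}
\item Plackett's identity is integrated from $G_a(0)=0$.
\item The integrand is even in $a$ and strictly decreasing in $|a|$ for each $r>0$, and this carries over to $A_a$ by integrating over the positive-measure set $\{u:\rho(u)>0\}$.
\item The factor $\phi(a)^2$ cancels because $\beta_k(m)=\phi(a)H_{k-1}(a)$.
\end{itemize}
Your extra points fill in details the paper leaves implicit. These are the derivation of Plackett's identity from $\partial_s\phi_2=\partial_x\partial_y\phi_2$, the continuity argument at $r=1$, and the finiteness of $A_a$ from $G_a\le G_0$. Since $G_0(r)=\arcsin(r)/(2\pi)\le r/4$, this last point even gives $A_a\le\tau_1/2$.
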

\begin{proof}
Plackett's identity gives
$\partial_r\Pr(U>a,V_r>a)=
\exp\{-a^2/(1+r)\}/[2\pi\sqrt{1-r^2}]$.
At $r=0$ the excess covariance is zero, so integration gives Eq.~\eqref{eq:plackett}. For every $r>0$ its integrand is even in $a$ and strictly decreases with $|a|$. Integration over any nonnegative $\rho$ preserves these properties and is strict when $\rho>0$ on a set of positive measure. Finally, the occupation Hermite coefficient is $\phi(a)H_{k-1}(a)$; the common squared factor $\phi(a)^2$ cancels between the numerator and denominator of Eq.~\eqref{eq:ellocc}.
\end{proof}
Threshold-near individuals therefore have more state-driven label variance to explain. In the OU protocols studied below, $\ell_a(w)$ decays far more slowly than $A_a$. For $D$ separated sparse windows, the conditional residual state risk is approximately
\begin{equation}
R_{\rm state}(D,w\mid m)\approx
\frac{A_a}{T}\left\{1-\frac{D\ell_a(w)}{T}\right\}.
\label{eq:boundaryresidual}
\end{equation}
Consequently, absolute residual error is largest near the threshold even when the fraction of state variance explained changes much less. The distinction is important: boundary-near individuals are not necessarily observed with a much less efficient window; rather, their labels contain substantially more state variation that the protocol must explain.

\section{Implications for ML Benchmarks}
Equations~\eqref{eq:modelgap}--\eqref{eq:r2gap} give a ceiling-aware interpretation of benchmark progress. First, a reported score should be compared with the information available under the benchmark's own $D$, $w$, $M$, and temporal placement, rather than with the unattainable value $R^2=1$. Second, total cross-sectional performance and state-tracking performance should be reported separately whenever the scientific claim concerns change. Subject-wise centering, repeated labels, or a calibrated latent trait can define the state target $\Delta_{g,T}$; without such a decomposition, a model may rank individuals well while failing to monitor them.

Third, acquisition and architecture should be treated as distinct experimental axes. Increasing training-set size estimates $f_\pi^*$ more accurately but does not alter $\cI_\pi$; changing the observation protocol alters the ceiling itself. A useful ablation therefore holds the raw measurement budget fixed while reallocating it between same-time replication and temporal coverage, as in Eqs.~\eqref{eq:samebudget}--\eqref{eq:dispersedbudget}. The calibration burden is asymmetric. The trait-channel ceiling in Eq.~\eqref{eq:traitdm} uses only $\alpha$ and $\sigma_\varepsilon^2$, available from ordinary test--retest data under the model, and can therefore be reported now for many existing benchmarks. Only state-channel claims require short-lag estimation of $\rho$ and its higher-order integrals; for that purpose, a small densely sampled subset can be more informative than another large cross-sectional sample collected under the same snapshot protocol.

\section{Experiments and Benchmark Implications}
We test analytic predictions rather than compare architectures. OU paths use exact transitions, labels are computed on a fine grid, and predictors are exact posterior expectations under the discretized process. The main table uses 50 independent repetitions of 2,000 objects per scenario; the equal-budget experiment uses 30 repetitions of 1,500 objects. Monte Carlo means are reported with 95\% half-widths.

\begin{figure*}[t]
\centering
\begin{minipage}[t]{0.49\textwidth}
\centering
\includegraphics[width=\linewidth]{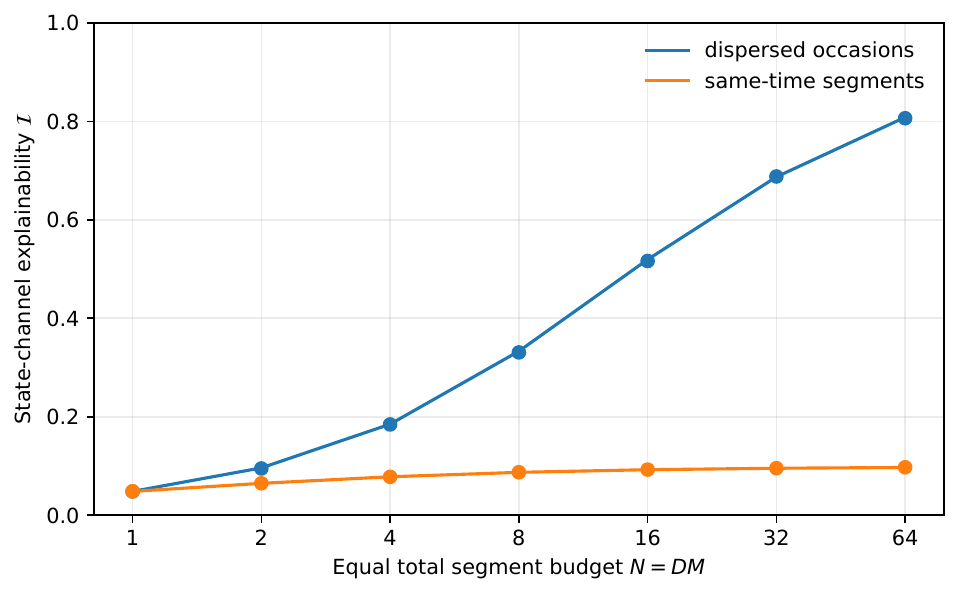}
\par\smallskip{\small (a) Equal raw-segment budget: precision versus coverage.\par}
\end{minipage}
\hfill
\begin{minipage}[t]{0.49\textwidth}
\centering
\includegraphics[width=\linewidth]{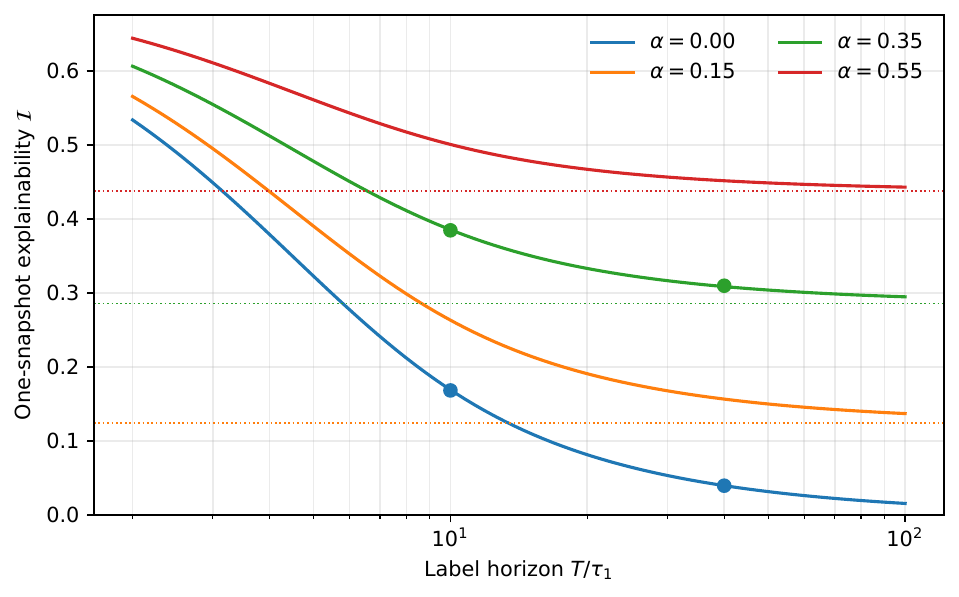}
\par\smallskip{\small (b) Snapshot ceilings over label horizons and trait shares.\par}
\end{minipage}
\caption{Protocol ceilings, not model training curves. In (a), exact state-channel ceilings and Monte Carlo estimates show that same-time replication rapidly saturates while dispersed occasions continue to add temporal information. In (b), stable traits create nonzero long-horizon snapshot ceilings; dotted lines are the $T\to\infty$ limits.}
\label{fig:ceilings}
\end{figure*}

\paragraph{Equal segment budget: $D$ versus $M$.}
Figure~\ref{fig:ceilings}a isolates the state channel ($\alpha=0$) with an occupation label, $T/\tau=20$, and unit noise per raw segment. For each $N=DM$, the same-time protocol uses $D=1,M=N$ at the midpoint, whereas the coverage protocol uses $D=N,M=1$ at evenly spaced times. For $c=0$, the exact ceiling is computed from Eq.~\eqref{eq:riskidentity} with $C_g(r)=\arcsin(r)/(2\pi)$. Same-time replication reaches only $0.097$ at $N=64$; dispersed occasions reach $0.808$ using the same 64 raw segments.

\paragraph{Benchmark ceilings from the trait channel.}
Figure~\ref{fig:ceilings}b verifies Eq.~\eqref{eq:snapshotlimit}: when $\alpha=0$, one-snapshot explainability vanishes as the label horizon grows, whereas nonzero trait shares converge to positive plateaus. For a zero-threshold occupation label with $T/\tau=14$ and point-noise variance $0.2$, Eq.~\eqref{eq:riskidentity} gives ceilings $\cI=0.119$, $0.256$, and $0.355$ for $\alpha=0$, $0.20$, and $0.35$. A benchmark stalled near $R^2=0.25$ may therefore be close to its acquisition ceiling under a moderate trait channel. These values are model-based illustrations, not estimates for a particular dataset.

\paragraph{Task dependence and boundary localization.}
Figure~\ref{fig:task}a plots $\ell_{\rm occ}/\ell_{\rm mean}$, making visible that OU and Mat\'ern-$3/2$ kernels matched to the same $\tau_1$ assign different relative value to the same window. Figure~\ref{fig:task}b separates magnitude from efficiency: $A_a$ is sharply concentrated near $a=0$, whereas $\ell_a(w)$ decays much more slowly because the universal $\phi(a)^2$ factor cancels.

\begin{figure}[t]
\centering
\includegraphics[width=\linewidth]{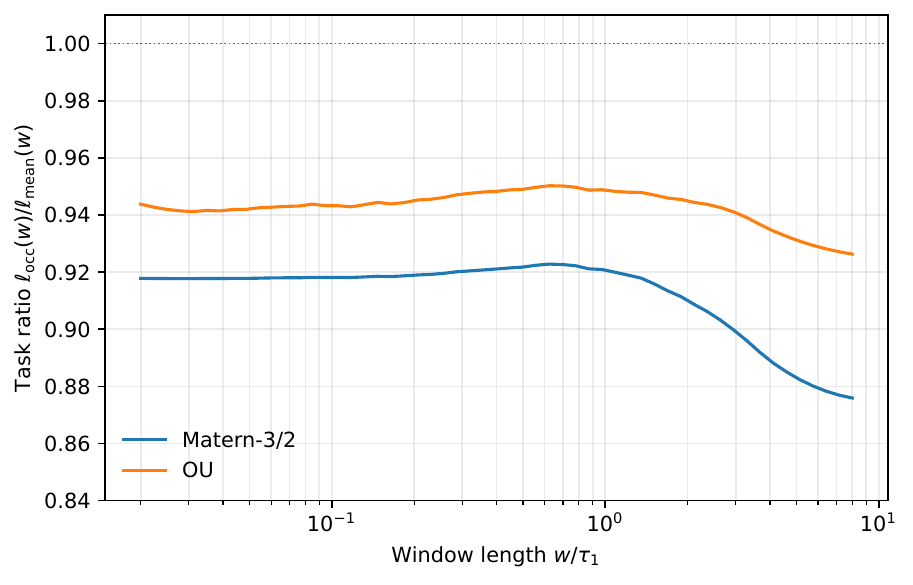}
\par\smallskip{\small (a) Occupation-to-mean effective-span ratio.\par}
\smallskip
\includegraphics[width=\linewidth]{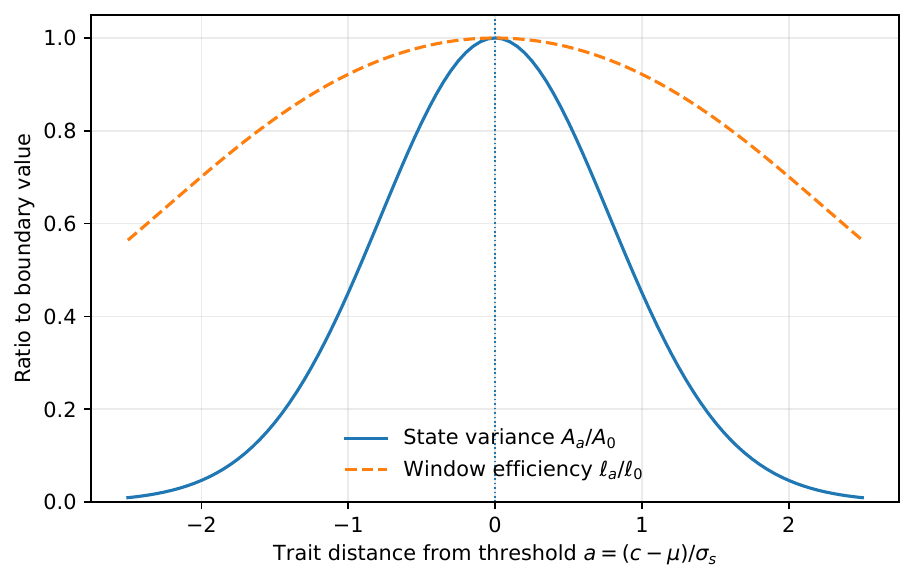}
\par\smallskip{\small (b) State variance localizes more strongly than efficiency.\par}
\caption{The label defines temporal information. Both kernels in panel (a) have $\tau_1=1$; panel (b) uses an OU window with $w/\tau_1=0.5$.}
\label{fig:task}
\end{figure}

\begin{table}[t]
\centering
\scriptsize
\resizebox{\columnwidth}{!}{%
\begin{tabular}{@{}rrrrrrr@{}}
\toprule
$\alpha$ & $c$ & $T/\tau$ & \multicolumn{2}{c}{$\Var(\Theta)$} & \multicolumn{2}{c}{$\cI$}\\
&&& Theory & MC $\pm$ 95\% half-width & Theory & MC $\pm$ 95\% half-width\\
\midrule
0.00 & 0 & 10 & 0.0314 & 0.0314 $\pm$ 0.0003 & 0.169 & 0.168 $\pm$ 0.0016 \\
0.00 & 0 & 40 & 0.0085 & 0.0085 $\pm$ 0.0001 & 0.040 & 0.040 $\pm$ 0.0003 \\
0.35 & 0 & 10 & 0.0800 & 0.0800 $\pm$ 0.0005 & 0.385 & 0.385 $\pm$ 0.0024 \\
0.35 & 0 & 40 & 0.0631 & 0.0629 $\pm$ 0.0003 & 0.309 & 0.310 $\pm$ 0.0026 \\
0.00 & 1 & 10 & 0.0143 & 0.0143 $\pm$ 0.0001 & 0.150 & 0.149 $\pm$ 0.0022 \\
0.00 & 1 & 40 & 0.0038 & 0.0038 $\pm$ 0.0000 & 0.035 & 0.036 $\pm$ 0.0005 \\
0.35 & 1 & 10 & 0.0363 & 0.0364 $\pm$ 0.0005 & 0.347 & 0.348 $\pm$ 0.0044 \\
0.35 & 1 & 40 & 0.0275 & 0.0276 $\pm$ 0.0004 & 0.278 & 0.275 $\pm$ 0.0040 \\
\bottomrule

\end{tabular}
}
\caption{Exact formulas versus Monte Carlo means. Each cell uses 50 repetitions of 2,000 objects; intervals are 95\% Monte Carlo half-widths.}
\label{tab:mc}
\end{table}

\section{Limitations and Conclusion}
The two channels have different calibration requirements. The trait-channel ceiling does not require a densely sampled calibration subset: under the normalized model, any suitable repeated-measurement or two-occasion test--retest dataset can estimate $\alpha$ and $\sigma_\varepsilon^2$ and directly evaluate Eq.~\eqref{eq:traitdm}. Purely cross-sectional data cannot identify that decomposition, but repeated measurements suffice without resolving the short-lag kernel.

The state channel is more demanding. Mean-label state variation depends on $\tau_1$, whereas occupation labels use $\tau_{k\ge2}$ and the full window profile. Sparse windows separated far beyond the correlation scale cannot identify this short-lag structure, so state-channel analysis requires a densely sampled subset, external short-lag longitudinal data, or a justified parametric kernel family. Figure~\ref{fig:task}a makes the issue visible: equal $\tau_1$ does not imply equal occupation-time information.

The occupation model is an abstraction of frequency-type labels, not a claim that every observed score is generated by one thresholded Gaussian state. The state-effective span is conditional on the trait; an average span defined as a ratio of expected explained and total state variance is weighted toward individuals with larger state variance and is not the span of a typical individual.

The central conclusion is not that segments are useless. Segments improve denoising and can strengthen the trait channel. They are not additional time. Long-horizon benchmark performance combines an $O(1)$ trait channel with a task-dependent state channel controlled by the aggregation functional, correlation structure, and temporal support. Consequently, a performance ceiling that appears architectural may instead be imposed by acquisition, and high cross-sectional accuracy need not imply that a model has learned temporal state dynamics.

\bibliography{references}

\clearpage
\onecolumn
\pagestyle{plain}
\raggedbottom
\setcounter{secnumdepth}{2}
\setcounter{section}{0}
\setcounter{subsection}{0}
\setcounter{equation}{0}
\setcounter{theorem}{0}
\setcounter{proposition}{0}
\setcounter{corollary}{0}
\setcounter{lemma}{0}
\setcounter{assumption}{0}
\renewcommand{\thesection}{S\arabic{section}}
\renewcommand{\thesubsection}{\thesection.\arabic{subsection}}
\renewcommand{\theequation}{S\arabic{equation}}
\renewcommand{\thetheorem}{S\arabic{theorem}}
\renewcommand{\theproposition}{S\arabic{proposition}}
\renewcommand{\thecorollary}{S\arabic{corollary}}
\renewcommand{\thelemma}{S\arabic{lemma}}
\renewcommand{\theassumption}{S\arabic{assumption}}

\begin{center}
{\LARGE\bfseries Technical Supplement\par}
\vspace{0.35em}
{\Large\itshape The Label Defines the Timescale: Trait--State Limits of
Temporal-Aggregate Learning\par}
\vspace{0.65em}
{\large Xizhe Zhang\par}
\vspace{0.2em}
ORCID: 0000-0002-8684-4591 \quad \textbullet \quad
zhangxizhe@gmail.com
\end{center}
\vspace{0.8em}

\begingroup
\setlength{\parskip}{0.15em}
\section{Model, Notation, and Regularity Conditions}
For each independent object, the standardized latent process is
\begin{equation}
Z(t)=\sqrt\alpha M+\sqrt{1-\alpha}X(t),\qquad t\in[0,T],
\label{eq:s-model}
\end{equation}
where $M\sim\cN(0,1)$, $X$ is a zero-mean unit-variance stationary Gaussian process, and $M\perp X$.  The state correlation is $\rho(u)=\Cov\{X(t),X(t+u)\}$, and the total-process correlation is
\begin{equation}
r_\alpha(u)=\alpha+(1-\alpha)\rho(u).
\end{equation}
For $g\in L^2(\phi)$, where $\phi$ is the standard-normal density, define
\begin{equation}
\Theta_{g,T}=\frac1T\int_0^T g\{Z(t)\}\dd t.
\end{equation}
The observation protocol is a finite-dimensional linear Gaussian measurement
\begin{equation}
Y_\pi=L_\pi Z+\varepsilon,\qquad \varepsilon\sim\cN(0,R_\pi),
\label{eq:s-obs}
\end{equation}
independent of $Z$.  Window averages are a special case.

We use the following sufficient conditions.  They are stronger than necessary but make every interchange explicit.

\begin{assumption}[Summability for Main Theorem 1]
\label{ass:s-summability}
The function $g$ is square integrable under the standard-normal law.  Whenever a long-horizon expansion is invoked,
\begin{equation}
\Delta_g(u)=C_g\{\alpha+(1-\alpha)\rho(u)\}-C_g(\alpha)
\end{equation}
is absolutely integrable on $[0,\infty)$.  For an $O(T^{-2})$ remainder, we additionally assume $\int_0^\infty u|\Delta_g(u)|\dd u<\infty$.
\end{assumption}

\begin{assumption}[Effective-span summability for Main Theorem 2]
\label{ass:s-effective-span}
For the conditional state label under consideration,
\begin{equation}
\sum_{k\ge1} b_k\tau_k<\infty,
\qquad
\sum_{k\ge1}b_kJ_k(w)^2<\infty,
\end{equation}
where $b_k$ are the squared Hermite coefficients, $\tau_k=\int_0^\infty\rho(u)^k\dd u$, and $J_k(w)=\int_\R r_w(t)^k\dd t$.
\end{assumption}

The paper focuses on nonnegative correlations to state the boundary theorem cleanly.  The exact risk identity itself permits negative correlations whenever $C_g$ is evaluated on the corresponding interval.

\section{Gaussian and Hermite Preliminaries}
Let $H_k$ denote the probabilists' Hermite polynomials, normalized by
\begin{equation}
\E\{H_j(U)H_k(U)\}=k!\,\Ind\{j=k\},\qquad U\sim\cN(0,1).
\end{equation}
For centered $g\in L^2(\phi)$,
\begin{equation}
g(z)-\E g(U)=\sum_{k\ge1}\frac{a_k(g)}{k!}H_k(z),
\qquad
a_k(g)=\E[(g(U)-\E g(U))H_k(U)].
\label{eq:s-hermite}
\end{equation}
If $(U,V_r)$ are standard bivariate normal with correlation $r$, Mehler's identity gives
\begin{equation}
C_g(r)=\Cov\{g(U),g(V_r)\}
=\sum_{k\ge1}\frac{a_k(g)^2}{k!}r^k.
\label{eq:s-cg}
\end{equation}

For the threshold function $g_a(z)=\Ind\{z>a\}$, integration by parts yields, for $k\ge1$,
\begin{align}
a_k(a)
&=\int_a^\infty H_k(z)\phi(z)\dd z\\
&=\phi(a)H_{k-1}(a),
\label{eq:s-ind-coef}
\end{align}
because $H_k(z)\phi(z)=-\{H_{k-1}(z)\phi(z)\}'$.  Thus
\begin{equation}
b_k(a):=\frac{a_k(a)^2}{k!}
=\phi(a)^2\frac{H_{k-1}(a)^2}{k!}.
\label{eq:s-bk}
\end{equation}

A second identity is used for the trait component.  If $M,X$ are independent standard normals, then
\begin{equation}
\E\left[H_k\{\sqrt\alpha M+\sqrt{1-\alpha}X\}\mid M\right]
=\alpha^{k/2}H_k(M).
\label{eq:s-cond-hermite}
\end{equation}
It follows either from the generating function of $H_k$ or from the Gaussian Ornstein--Uhlenbeck semigroup.

\section{Posterior Predictor Used in Validation}
The main paper proves the exact protocol-risk identity. Here we record the computational form used to verify it. Let
\[
k_\pi(t)=\Cov\{Z(t),Y_\pi\},\quad
q_\pi(s,t)=k_\pi(s)^\top\Var(Y_\pi)^{-1}k_\pi(t).
\]
For $g_c(z)=\Ind\{z>c\}$, Gaussian conditioning gives
\begin{equation}
Z(t)\mid Y_\pi\sim\cN\{m_\pi(t),1-q_\pi(t,t)\},\quad
m_\pi(t)=k_\pi(t)^\top\Var(Y_\pi)^{-1}Y_\pi.
\end{equation}
The corresponding exact risk is
\begin{equation}
R_\pi^*=\frac1{T^2}\int_0^T\!\int_0^T
\left[C_g\{r_\alpha(s-t)\}-C_g\{q_\pi(s,t)\}\right]\dd s\dd t.
\label{eq:s-risk}
\end{equation}
Hence the Bayes predictor is
\begin{equation}
\widehat\Theta_{c,T}^{\,*}
=\frac1T\int_0^T\PhiBar\!\left(
\frac{c-m_\pi(t)}{\sqrt{1-q_\pi(t,t)}}
\right)\dd t.
\end{equation}
At $c=0$, $C_g(r)=\arcsin(r)/(2\pi)$.

\section{Fixed-Support Refinement}
The following lemma records the exact sense in which more segments are useful but do not become additional time.

\begin{lemma}[Fixed-support refinement supporting the main-paper discussion]
\label{lem:s-fixed-support}
Let $\cF_M$ be an increasing sequence of sigma-fields generated by progressively finer observations on a fixed temporal set $W$, and suppose $\cF_M\uparrow\cF_W$.  For every $\Theta\in L^2$,
\begin{equation}
\E\Var(\Theta\mid\cF_M)
\downarrow
\E\Var(\Theta\mid\cF_W).
\end{equation}
\end{lemma}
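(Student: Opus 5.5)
We will reduce the statement to the $L^2$ martingale convergence theorem (L\'evy's upward theorem). Set $\Theta_M=\E(\Theta\mid\cF_M)$ and $\Theta_W=\E(\Theta\mid\cF_W)$. By the law of total variance, for any sub-sigma-field $\mathcal G$,
\[
\E\Var(\Theta\mid\mathcal G)=\E\Theta^2-\E\{\E(\Theta\mid\mathcal G)^2\}.
\]
It therefore suffices to show that $\E\Theta_M^2$ is nondecreasing in $M$ and converges to $\E\Theta_W^2$.

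\emph{Monotonicity.} If $\cF_M\subseteq\cF_{M+1}$, the tower property gives $\Theta_M=\E(\Theta_{M+1}\mid\cF_M)$. Conditional expectation is an orthogonal projection in $L^2$, so
\[
\E\Theta_{M+1}^2=\E\Theta_M^2+\E(\Theta_{M+1}-\Theta_M)^2\ge\E\Theta_M^2 .
\]
Hence $\E\Var(\Theta\mid\cF_M)$ is nonincreasing. The same projection argument with $\cF_M\subseteq\cF_W$ shows that it is bounded below by $\E\Var(\Theta\mid\cF_W)$.

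\emph{Convergence.} The sequence $(\Theta_M)$ is a martingale with respect to $(\cF_M)$, and it is bounded in $L^2$ because $\E\Theta_M^2\le\E\Theta^2<\infty$. By L\'evy's upward theorem in its $L^2$ form, $\Theta_M\to\E(\Theta\mid\cF_\infty)$ almost surely and in $L^2$, where $\cF_\infty=\sigma(\bigcup_M\cF_M)$. The hypothesis $\cF_M\uparrow\cF_W$ means exactly $\cF_\infty=\cF_W$, so the limit is $\Theta_W$.

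If one prefers to avoid martingale convergence theorems, there is a direct Hilbert-space argument. The closed subspaces $L^2(\cF_M)$ increase, and the closure of their union is $L^2(\cF_W)$; this follows because bounded $\cF_W$-measurable functions are $L^2$-approximable by functions measurable with respect to some $\cF_M$, using a $\pi$--$\lambda$ or monotone-class argument on the generating algebra $\bigcup_M\cF_M$. Projections onto an increasing family of subspaces then converge strongly to the projection onto the closure of their union.

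$L^2$ convergence gives $\E\Theta_M^2\to\E\Theta_W^2$, and substituting into the total-variance identity yields the monotone convergence claimed in the lemma. The only delicate point is identifying the limit sigma-field, namely that $\cF_M\uparrow\cF_W$ means $\sigma(\bigcup_M\cF_M)=\cF_W$. This is simply the stated hypothesis, so no real obstacle remains.

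In the paper's application, $\Theta=\Theta_{g,T}$ lies in $L^2$ because $g\in L^2(\phi)$. Jensen's inequality and Fubini give $\E\Theta_{g,T}^2\le T^{-1}\int_0^T\E g\{Z(t)\}^2\dd t=\E g(U)^2<\infty$, so the lemma applies.
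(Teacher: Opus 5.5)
Your proof is correct and follows the paper's route: the identity $\E\Var(\Theta\mid\mathcal G)=\E\Theta^2-\E\{\E(\Theta\mid\mathcal G)^2\}$, monotonicity from the projection property of conditional expectation, and the limit from $L^2$ martingale convergence (L\'evy's upward theorem). You also spell out details the paper leaves implicit: the orthogonal-increment identity, the identification $\sigma(\bigcup_M\cF_M)=\cF_W$ of the limit, and the check that $\Theta_{g,T}\in L^2$.
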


\begin{proof}
The martingale $\E(\Theta\mid\cF_M)$ converges to $\E(\Theta\mid\cF_W)$ in $L^2$ by the martingale convergence theorem.  Since
\begin{equation}
\E\Var(\Theta\mid\cF_M)=\E(\Theta^2)-\E[\E(\Theta\mid\cF_M)^2],
\end{equation}
the result follows.  Monotonicity is the projection property of conditional expectation.
\end{proof}

The limiting risk can be zero in exceptional analytically determined processes; the paper's statement explicitly concerns processes for which unobserved temporal support retains innovations relevant to the target.  Additional segments can reduce measurement noise, including noise that obscures the trait, but cannot change $W$.

\section{Verification of Main Theorem 1: Trait--State Decomposition}
\begin{theorem}[Trait--state decomposition; corresponds to Main Theorem 1]
\label{thm:s-trait-state}
Assume $0\le\rho\le1$ and Assumption~\ref{ass:s-summability}. Then
\begin{equation}
\Var(\Theta_{g,T})=C_g(\alpha)+\frac{A_{\rm state}(g)}{T}
+o(T^{-1}),
\end{equation}
where
\begin{equation}
A_{\rm state}(g)=2\int_0^\infty
\left[C_g\{\alpha+(1-\alpha)\rho(u)\}-C_g(\alpha)\right]\dd u,
\end{equation}
equivalently given by Eq.~\eqref{eq:s-binomial}, and
$C_g(\alpha)=\Var[\E\{g(Z(t))\mid M\}]$.
\end{theorem}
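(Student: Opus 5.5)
The proof starts from the variance of the time average, written as a one-dimensional lag integral. By Fubini's theorem and stationarity of $Z$, the correlation between $Z(s)$ and $Z(t)$ depends only on $|s-t|$. Hence
\[
\Var(\Theta_{g,T})=\frac{1}{T^2}\int_0^T\!\!\int_0^T C_g\{r_\alpha(s-t)\}\dd s\dd t
=\frac{2}{T^2}\int_0^T (T-u)\,C_g\{r_\alpha(u)\}\dd u .
\]
We then add and subtract $C_g(\alpha)$ inside the integral. Because $\frac{2}{T^2}\int_0^T (T-u)\dd u=1$, the constant part contributes exactly $C_g(\alpha)$. What remains is $\frac{2}{T}\int_0^\infty \Ind\{u\le T\}(1-u/T)\Delta_g(u)\dd u$.

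By Assumption~\ref{ass:s-summability}, $\Delta_g$ is integrable, and the factor $\Ind\{u\le T\}(1-u/T)$ is bounded by $1$ and tends to $1$ pointwise. Dominated convergence therefore gives $\int_0^\infty \Delta_g+o(1)$, which is $A_{\rm state}(g)/(2)+o(1)$. Multiplying by $2/T$ yields $A_{\rm state}(g)/T+o(T^{-1})$. If the stronger moment condition $\int u|\Delta_g|<\infty$ holds, the two error pieces are each $O(T^{-1})$ inside the bracket. One piece is the truncation $\int_T^\infty|\Delta_g|\le T^{-1}\int u|\Delta_g|$, and the other is the term $T^{-1}\int u\Delta_g$. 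Together they upgrade the remainder to $O(T^{-2})$.

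For the Hermite form of $A_{\rm state}$, substitute the Mehler expansion~\eqref{eq:s-cg} for $C_g$. Expand $[\alpha+(1-\alpha)\rho]^k-\alpha^k=\sum_{j=1}^k\binom kj\alpha^{k-j}(1-\alpha)^j\rho^j$ binomially. Since $0\le\rho\le1$ and $0\le\alpha\le1$, every term is nonnegative, so Tonelli's theorem justifies integrating term by term. This produces $2\sum_k \frac{a_k^2}{k!}\sum_j\binom kj\alpha^{k-j}(1-\alpha)^j\tau_j$. Nonnegativity also shows that finiteness of this series is equivalent to integrability of $\Delta_g$, so the two formulas agree whenever either is finite.

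For the identification $C_g(\alpha)=\Var[\E\{g(Z(t))\mid M\}]$, condition the Hermite expansion~\eqref{eq:s-hermite} of $g$ on $M$. Using~\eqref{eq:s-cond-hermite} term by term gives $\E\{g(Z)\mid M\}-\E g=\sum_k \frac{a_k}{k!}\alpha^{k/2}H_k(M)$. Exchanging the conditional expectation with the series is legitimate because conditional expectation is an $L^2$ contraction and the series converges in $L^2$. Orthogonality then gives variance $\sum_k a_k^2\alpha^k/k!=C_g(\alpha)$.

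The only delicate step is this last interchange and the dominated-convergence step. Both are routine under the stated assumption, so I expect no substantive obstacle; the main care is making the $L^2$ convergence of the conditioned series explicit.
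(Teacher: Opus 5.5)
Your proposal is correct and follows the paper's proof essentially step for step. The shared steps are the stationary lag-integral representation, the exact subtraction of $C_g(\alpha)$ via $2T^{-2}\int_0^T(T-u)\dd u=1$, dominated convergence for the $T^{-1}$ coefficient, Mehler plus binomial expansion justified by Tonelli, and the conditional Hermite identity for the trait variance. Your extra details (the tail bound $\int_T^\infty|\Delta_g|\le T^{-1}\int_0^\infty u|\Delta_g|$, the equivalence of the two finiteness conditions by nonnegativity, and the $L^2$-contraction justification) only make explicit what the paper leaves implicit.
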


\begin{proof}
By stationarity,
\begin{equation}
\Var(\Theta_{g,T})=\frac{2}{T^2}\int_0^T(T-u)C_g\{r_\alpha(u)\}\dd u.
\label{eq:s-stationary}
\end{equation}
Because $2T^{-2}\int_0^T(T-u)\dd u=1$, Eq.~\eqref{eq:s-stationary} can be written exactly as
\begin{equation}
\Var(\Theta_{g,T})=C_g(\alpha)+\frac{2}{T^2}\int_0^T(T-u)\Delta_g(u)\dd u,
\label{eq:s-exactdecomp}
\end{equation}
where $\Delta_g(u)=C_g\{\alpha+(1-\alpha)\rho(u)\}-C_g(\alpha)$.

The main-paper asymptotic follows directly from the exact decomposition:
\begin{align}
T\{\Var(\Theta_{g,T})-C_g(\alpha)\}
&=2\int_0^T\left(1-\frac{u}{T}\right)\Delta_g(u)\dd u.
\end{align}
Dominated convergence gives $A_{\rm state}(g)=2\int_0^\infty\Delta_g(u)\dd u$. If the first absolute moment of $\Delta_g$ is finite, adding and subtracting the infinite integral bounds the remainder by an $O(T^{-2})$ term plus $T^{-1}\int_T^\infty|\Delta_g(u)|\dd u$. This finite-$T$ bound is the appendix-level verification not needed for the leading statement in the main paper.

Insert Eq.~\eqref{eq:s-cg} and expand binomially:
\begin{align}
A_{\rm state}(g)
&=2\sum_{k\ge1}\frac{a_k(g)^2}{k!}
\int_0^\infty\left[\{\alpha+(1-\alpha)\rho(u)\}^k-\alpha^k\right]\dd u\\
&=2\sum_{k\ge1}\frac{a_k(g)^2}{k!}
\sum_{j=1}^k\binom{k}{j}\alpha^{k-j}(1-\alpha)^j\tau_j.
\label{eq:s-binomial}
\end{align}
Tonelli's theorem applies directly when $0\le\rho\le1$.

For completeness, applying Eq.~\eqref{eq:s-cond-hermite} term-by-term to Eq.~\eqref{eq:s-hermite} verifies the trait identity
\begin{equation}
\Var\left[\E\{g(Z(t))\mid M\}\right]
=\sum_{k\ge1}\frac{a_k(g)^2}{k!}\alpha^k=C_g(\alpha).
\end{equation}
\end{proof}

\subsection{One-Snapshot Explainability}
Let $Y=Z(T/2)+\varepsilon$ with $\varepsilon\sim\cN(0,\nu^2)$. The protocol-explained covariance is
\begin{equation}
q_Y(s,t)=\frac{r_\alpha(|s-T/2|)r_\alpha(|t-T/2|)}{1+\nu^2}.
\end{equation}
If $\alpha>0$, Ces\`aro convergence and the exact-risk identity give
\begin{equation}
\lim_{T\to\infty}\Var\{\E(\Theta_{g,T}\mid Y)\}
=C_g\left(\frac{\alpha^2}{1+\nu^2}\right),
\end{equation}
and therefore
\begin{equation}
\lim_{T\to\infty}\cI_Y
=\frac{C_g\{\alpha^2/(1+\nu^2)\}}{C_g(\alpha)}.
\end{equation}
For the mean and occupation labels, if $\alpha=0$ and $\rho$ is integrable, the explained variance is $O(T^{-2})$ while the label variance is $O(T^{-1})$, yielding $\cI_Y=O(T^{-1})$.

\begin{corollary}[Trait-channel value of occasions and within-occasion segments; corresponds to Main Corollary 1]
\label{cor:s-trait-channel}
Consider the mean label as $T\to\infty$. Suppose $D$ occasions are separated enough that their state terms are independent, and each occasion averages $M$ independent raw segments with measurement-noise variance $\sigma_\varepsilon^2$ per segment. Then the protocol explainability for the limiting trait target $\sqrt\alpha M_i$ is
\begin{equation}
\cI_{\rm trait}(D,M)=
\frac{\alpha}
{\alpha+(1-\alpha)/D+\sigma_\varepsilon^2/(DM)}.
\label{eq:s-traitdm}
\end{equation}
\end{corollary}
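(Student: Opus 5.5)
The plan is to reduce the statement to a scalar Gaussian regression. For the mean label, the trait component of $\Theta_{g,T}$ is $h_g(M_i)=\sqrt\alpha M_i$. The state component $\Delta_{g,T}=\sqrt{1-\alpha}\,T^{-1}\int_0^T X_i(t)\dd t$ has variance $(1-\alpha)A/T\to0$ by Theorem~\ref{thm:s-trait-state}. So the limiting target is $\sqrt\alpha M_i$, and the protocol explainability is $1-\E\Var(\sqrt\alpha M_i\mid Y_\pi)/\alpha$.

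First, I will write the protocol data explicitly. On occasion $d$, each raw segment gives
\[
Y_{dm}=\sqrt\alpha M_i+\sqrt{1-\alpha}\,X_d+\varepsilon_{dm},
\]
where $X_d$ is the standardized state value on that occasion and $\varepsilon_{dm}$ are i.i.d.\ $\cN(0,\sigma_\varepsilon^2)$. The occasions are separated enough that the $X_d$ are independent standard normals. I treat the within-occasion segments as sharing one state value; this is the idealization implicit in the statement.

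Second, I will show that the grand mean $\bar Y=(DM)^{-1}\sum_{d,m}Y_{dm}$ is sufficient for $M_i$. The covariance structure is exchangeable: a common trait term, an occasion-level random effect, and i.i.d.\ noise. The likelihood of $M_i$ therefore depends on the data only through $\bar Y$. Concretely, within-occasion contrasts and between-occasion contrasts are orthogonal to the all-ones direction. They are jointly Gaussian and uncorrelated with $\bar Y$, and their distribution does not involve $M_i$. This step, justifying that nothing beyond $\bar Y$ carries trait information, is the only non-routine point, and it is settled by the balanced random-effects ANOVA decomposition.

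Third, I will compute the variances. We have
\[
\Var(\bar Y)=\alpha+(1-\alpha)/D+\sigma_\varepsilon^2/(DM),
\qquad
\Cov(\sqrt\alpha M_i,\bar Y)=\alpha .
\]
The Gaussian $R^2$ is then $\alpha^2/[\alpha\,\Var(\bar Y)]$, which gives Eq.~\eqref{eq:s-traitdm}.

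Finally, I will justify passing to the limit $T\to\infty$. The observations are fixed-dimensional, while the discarded state part of the label has vanishing variance. Hence the explained variance and the label variance both converge to their trait-only values, by Cauchy--Schwarz bounds on the cross terms.
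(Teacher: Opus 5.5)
Your proposal is correct and follows the paper's route. Both arguments reduce to the grand mean $\bar Y=\sqrt\alpha M_i+\sqrt{1-\alpha}\,\bar X_D+\bar\varepsilon$, read off the independent variances $\alpha$, $(1-\alpha)/D$, $\sigma_\varepsilon^2/(DM)$, and apply the Gaussian $R^2$ formula $\Cov(\sqrt\alpha M_i,\bar Y)^2/[\alpha\Var(\bar Y)]$. Your two extra steps, which are not in the paper, are both valid. The contrast argument, in which contrasts orthogonal to $\mathbf{1}$ are independent of $(M_i,\bar Y)$, establishes that $\bar Y$ carries all of the protocol's trait information, so the computed $R^2$ is the Bayes explainability and not merely a lower bound. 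The Cauchy--Schwarz step shows that the mean label's explainability converges to this trait value as $T\to\infty$.
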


\begin{proof}
The average across all observations can be written
\begin{equation}
\bar Y=\sqrt\alpha M_i+\sqrt{1-\alpha}\,\bar X_D+\bar\varepsilon,
\end{equation}
with independent components and variances $\alpha$, $(1-\alpha)/D$, and $\sigma_\varepsilon^2/(DM)$. The Gaussian regression coefficient of determination for predicting $\sqrt\alpha M_i$ from $\bar Y$ is $\Cov(\sqrt\alpha M_i,\bar Y)^2/[\Var(\sqrt\alpha M_i)\Var(\bar Y)]$, which simplifies to Eq.~\eqref{eq:s-traitdm}.
\end{proof}

Same-time replication has $D=1$: as $M\to\infty$ it removes measurement noise but leaves transient state variance. Increasing temporally separated occasions additionally averages the state component, so the two replication axes are statistically distinct even for the trait channel.
After standardization, a two-occasion test--retest design at a lag with negligible transient correlation identifies $\alpha$ from cross-occasion covariance and $\sigma_\varepsilon^2$ from observed variance (or within-occasion segments). Thus this trait ceiling is available from ordinary repeated-measurement data; it does not require estimation of $\rho$ or $\tau_{k\ge2}$.

\section{Verification of Main Theorem 2: Effective Temporal Span}
Condition on $M=m$. For the occupation label, the trait-conditioned threshold for the unit state process is
\begin{equation}
a=\frac{c-\sqrt\alpha m}{\sqrt{1-\alpha}}.
\end{equation}
Let $W_w$ be a standardized noisy average of $X$ over a fixed window of length $w$ centered at $t_0=T/2$, and write
\begin{equation}
r_w(u)=\Corr\{X(t_0+u),W_w\},\qquad
J_k(w)=\int_\R r_w(u)^k\dd u.
\end{equation}

\begin{theorem}[Task-dependent state-effective span; corresponds to Main Theorem 2]
\label{thm:s-effective-span}
Assume the window remains interior as $T\to\infty$, $r_w^k\in L^1(\R)$ for every Hermite order carrying nonzero weight, and the effective-span summability assumption holds. Then:
\begin{enumerate}[leftmargin=1.5em,itemsep=2pt]
\item For the mean state label,
\begin{equation}
\cI_{\rm state}(w)=\frac{\ell_{\rm mean}(w)}{T}+o(T^{-1}),
\qquad
\ell_{\rm mean}(w)=\frac{2\tau_1}{\eta(w)+\nu^2},
\label{eq:s-ellmean}
\end{equation}
where
\begin{equation}
\eta(w)=\frac{2}{w^2}\int_0^w(w-u)\rho(u)\dd u.
\end{equation}
\item For the occupation state label $\Theta_{a,T}=T^{-1}\int_0^T\Ind\{X(t)>a\}\dd t$,
\begin{equation}
\cI_{\rm state}(w\mid a)=\frac{\ell_a(w)}{T}+o(T^{-1}),
\label{eq:s-effective}
\end{equation}
with
\begin{equation}
\ell_a(w)=
\frac{\displaystyle\sum_{k\ge1}\frac{H_{k-1}(a)^2}{k!}J_k(w)^2}
{\displaystyle2\sum_{k\ge1}\frac{H_{k-1}(a)^2}{k!}\tau_k}.
\label{eq:s-ella}
\end{equation}
\end{enumerate}
\end{theorem}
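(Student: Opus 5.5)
Conditional on $M=m$, the state label is $T^{-1}\int_0^T g_m\{X(t)\}\dd t$ with $g_m(x)=g(\sqrt\alpha m+\sqrt{1-\alpha}x)$. The trait is fixed, so only the state process and the single window observation $W_w$ remain. I will expand $g_m$ in Hermite polynomials, $g_m(x)-\E g_m(U)=\sum_{k\ge1}\beta_k H_k(x)/k!$. For the occupation label, Eq.~\eqref{eq:s-ind-coef} applied at the trait-conditioned threshold $a$ gives $\beta_k=\phi(a)H_{k-1}(a)$. For the mean label only $\beta_1$ is nonzero.

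\textbf{Denominator.} By Mehler's identity \eqref{eq:s-cg}, with $\alpha=0$ in the argument of Theorem~\ref{thm:s-trait-state},
\[
\Var(\Theta\mid M=m)=\frac{2}{T^2}\int_0^T(T-u)\sum_k\frac{\beta_k^2}{k!}\rho(u)^k\dd u .
\]
Tonelli (using $\rho\ge0$) and dominated convergence, with summability from Assumption~\ref{ass:s-effective-span}, give $A_g(m)/T+o(T^{-1})$, where $A_g(m)=2\sum_k\beta_k^2\tau_k/k!$.

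\textbf{Numerator.} Since $(X(t),W_w)$ is standard bivariate normal with correlation $r_w(t-t_0)$, Gaussian regression gives $\E\{H_k(X(t))\mid W_w\}=r_w(t-t_0)^kH_k(W_w)$. This can be checked with the Hermite generating function or the Ornstein--Uhlenbeck semigroup, as in Eq.~\eqref{eq:s-cond-hermite}. Hence
\[
\E(\Theta\mid W_w,M=m)-\E(\Theta\mid M=m)=T^{-1}\sum_k\frac{\beta_k}{k!}H_k(W_w)\,I_k(T),
\qquad I_k(T)=\int_0^T r_w(t-t_0)^k\dd t .
\]
Orthogonality of the $H_k(W_w)$ then gives the explained variance $T^{-2}\sum_k\beta_k^2I_k(T)^2/k!$. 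Interchanging the conditional expectation with the time integral and the $L^2$ Hermite series needs justification: conditional Fubini handles the time integral, and $L^2$-continuity of conditional expectation handles the series.

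\textbf{Limit.} For an interior window ($t_0=T/2$), $r_w^k\in L^1$ gives $I_k(T)\to J_k(w)$. I also need a uniform bound to pass the limit through the sum. Since $|r_w|\le1$, I would bound $|I_k(T)|\le\int_\R|r_w|^k$ and dominate the series by $\sum_k b_k\big(\int|r_w|^k\big)^2$. Justifying this domination is the main obstacle, because Assumption~\ref{ass:s-effective-span} controls $J_k(w)^2$ and not this absolute version. When $r_w\ge0$, as for nonnegative $\rho$ with averaging windows, the two coincide and the stated summability suffices; I will note that case explicitly.

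\textbf{Ratio.} Dividing explained by total state variance gives $\cI_{\rm state}=\ell/T+o(T^{-1})$ with $\ell=\sum_kb_kJ_k^2\big/\big(2\sum_kb_k\tau_k\big)$. For the occupation label the common factor $\phi(a)^2$ cancels, yielding \eqref{eq:s-ella}.

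\textbf{Mean label.} With $W_w=(L_wX+\varepsilon)/\sqrt{\eta(w)+\nu^2}$ we have $\Var(L_wX)=\eta(w)$, and
\[
\Cov\{X(t_0+u),L_wX\}=w^{-1}\int_{-w/2}^{w/2}\rho(u-v)\dd v .
\]
Integrating over $u\in\R$ gives $\int_\R\rho=2\tau_1$, so $J_1=2\tau_1/\sqrt{\eta+\nu^2}$. Then $J_1^2/(2\tau_1)$ gives \eqref{eq:s-ellmean}.
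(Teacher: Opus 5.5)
Your proposal is correct and follows essentially the same route as the paper. Both arguments expand in Hermite polynomials conditional on the trait, and both use Mehler's identity to get the $A_g(m)/T$ denominator. Both use the regression identity $\E\{H_k(X(t))\mid W_w\}=r_w^kH_k(W_w)$ with orthogonality for the numerator, the convergence of the truncated integrals to $J_k(w)$, the cancellation of $\phi(a)^2$, and $J_1(w)=2\tau_1/\sqrt{\eta(w)+\nu^2}$ for the mean label.

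Where you differ, you are more careful than the paper's one-line appeal to summability when passing the limit through the series. Under $\rho\ge0$, which you use explicitly and the paper uses implicitly through Theorem~\ref{thm:s-trait-state}, one has $r_w\ge0$, so $I_k(T)\uparrow J_k(w)$ and monotone convergence closes that step.
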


\begin{proof}
\medskip\noindent\textit{Mean label.}\ 
For a unit state process,
\begin{equation}
\eta(w)=\Var\left\{\frac1w\int_{-w/2}^{w/2}X(u)\dd u\right\}
=\frac{2}{w^2}\int_0^w(w-u)\rho(u)\dd u.
\end{equation}
If additive window noise has variance $\nu^2$, stationarity and Fubini's theorem give
\begin{equation}
J_1(w)=\int_\R r_w(u)\dd u
=\frac{2\tau_1}{\sqrt{\eta(w)+\nu^2}}.
\label{eq:s-J1}
\end{equation}
The long-horizon state-label variance is $2\tau_1/T+o(T^{-1})$. The variance explained by $W_w$ equals
\begin{equation}
\frac1{T^2}\left\{\int_0^T r_w(t-t_0)\dd t\right\}^2.
\end{equation}
Because $t_0=T/2$ and $r_w\in L^1(\R)$,
\begin{equation}
\int_0^T r_w(t-t_0)\dd t
=\int_{-T/2}^{T/2}r_w(u)\dd u
=J_1(w)+o(1).
\end{equation}
Dividing the explained variance by the state-label variance and using Eq.~\eqref{eq:s-J1} proves Eq.~\eqref{eq:s-ellmean}.

\medskip\noindent\textit{Occupation label.}\ 
Using Eq.~\eqref{eq:s-ind-coef},
\begin{equation}
\Var(\Theta_{a,T})=\frac{A_a}{T}+o(T^{-1}),
\qquad
A_a=2\sum_{k\ge1}b_k(a)\tau_k.
\label{eq:s-Aa}
\end{equation}
For jointly normal $X(t)$ and $W_w$,
\begin{equation}
\E\{H_k(X(t))\mid W_w\}=r_w(t-t_0)^kH_k(W_w).
\end{equation}
Orthogonality therefore gives
\begin{equation}
\Var\{\E(\Theta_{a,T}\mid W_w)\}
=\frac1{T^2}\sum_{k\ge1}b_k(a)
\left\{\int_0^T r_w(t-t_0)^k\dd t\right\}^2.
\end{equation}
For each fixed $k$, absolute integrability and the interior placement imply
\begin{equation}
\int_0^T r_w(t-t_0)^k\dd t
=\int_{-T/2}^{T/2}r_w(u)^k\dd u
=J_k(w)+o(1).
\end{equation}
The summability assumption permits passage of this limit through the Hermite series, yielding
\begin{equation}
\Var\{\E(\Theta_{a,T}\mid W_w)\}
=\frac{B_a(w)}{T^2}+o(T^{-2}),
\qquad
B_a(w)=\sum_{k\ge1}b_k(a)J_k(w)^2.
\end{equation}
Dividing by Eq.~\eqref{eq:s-Aa} proves Eq.~\eqref{eq:s-effective}. Substituting $b_k(a)=\phi(a)^2H_{k-1}(a)^2/k!$ yields Eq.~\eqref{eq:s-ella}; the factor $\phi(a)^2$ cancels exactly.
\end{proof}

For $\rho(u)=e^{-u/\tau}$ and $\nu^2=0$,
\begin{equation}
\eta(w)=\frac{2\tau\{w-\tau(1-e^{-w/\tau})\}}{w^2},
\qquad
\ell_{\rm mean}(w)=\frac{w^2}{w-\tau(1-e^{-w/\tau})}.
\end{equation}
Taylor expansion gives $\ell_{\rm mean}(w)=2\tau+(2/3)w+O(w^2/\tau)$ as $w/\tau\downarrow0$, and $\ell_{\rm mean}(w)=w+\tau+O(\tau^2/w)$ as $w/\tau\to\infty$.

\begin{corollary}[Equal raw-segment budget in the state channel; corresponds to Main Corollary 2]
\label{cor:s-equal-budget}
Write $\ell_g(w;\nu^2,m)$ for the trait-conditioned effective span when an averaged window has noise variance $\nu^2$. Allocate $N$ independent raw measurements either to one fixed window, averaged to noise variance $\sigma_\varepsilon^2/N$, or to $N$ mutually separated windows, each with noise variance $\sigma_\varepsilon^2$. Under the sparse long-horizon conditions of the effective-span theorem above,
\begin{align}
\cI_{\rm same}(N\mid m)
&=\frac{\ell_g(w;\sigma_\varepsilon^2/N,m)}{T}+o(T^{-1})
\longrightarrow \frac{\ell_g(w;0,m)}{T},\label{eq:s-samebudget}\\
\cI_{\rm dispersed}(N\mid m)
&=\frac{N\ell_g(w;\sigma_\varepsilon^2,m)}{T}+o(N/T),
\label{eq:s-dispersedbudget}
\end{align}
until the first-order additivity approximation approaches saturation.
\end{corollary}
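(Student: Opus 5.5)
The plan is to reduce both claims to the effective-span theorem (Theorem~\ref{thm:s-effective-span}), conditioning on $M=m$ throughout so that only the state process $X$ and the trait-conditioned functional $g_m$ matter.

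\textbf{Same-time allocation.} Placing $N$ independent raw segments on one fixed window of length $w$ yields observations $L_wX+\varepsilon_n$, $n=1,\dots,N$. Their average is a sufficient statistic for $L_wX$: the likelihood depends on the $\varepsilon_n$ only through the sample mean, and the residuals are independent of $X$. The protocol is therefore equivalent to a single window observation with noise variance $\sigma_\varepsilon^2/N$. Standardizing this average gives $W_w$ with correlation profile $r_w$ computed at $\nu^2=\sigma_\varepsilon^2/N$. Theorem~\ref{thm:s-effective-span} then gives the first equality in Eq.~\eqref{eq:s-samebudget} directly.

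For the limit $N\to\infty$, write $r_w^{(\nu)}(u)=\Cov\{X(t_0+u),L_wX\}/\sqrt{\eta(w)+\nu^2}$. This is monotone in $\nu^2$ and converges pointwise to the noise-free profile. Dominated convergence, with $|r_w^{(\nu)}|^k\le|r_w^{(0)}|$ as the bound, gives $J_k(w;\nu)\to J_k(w;0)$ for each $k$. The summability of $\sum b_kJ_k(w;0)^2$ together with monotonicity then passes the limit through the Hermite series. Hence $\ell_g(w;\sigma_\varepsilon^2/N,m)\uparrow\ell_g(w;0,m)$.

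\textbf{Dispersed allocation.} Take centers $t_1,\dots,t_N$ with mutual separations tending to infinity while staying interior, each window having noise $\sigma_\varepsilon^2$. The explained variance $T^{-2}\iint C_{g_m}\{q_\pi(s,t)\}\,\dd s\,\dd t$ depends on $q_\pi=k_\pi(s)^\top V_\pi^{-1}k_\pi(t)$. Because the state kernel is integrable, the cross-covariances $\Cov(W_d,W_{d'})\to0$, so $V_\pi\to I$ and
\[
q_\pi(s,t)\approx\sum_{d=1}^N r_w(s-t_d)\,r_w(t-t_d).
\]

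The explained variance equals $\Var\{\E(\Theta\mid W_{1:N})\}$. Expanding via the Hermite series in the Gaussian vector $(W_d)$ with independent coordinates, the conditional expectation is $\sum_k (\beta_k/k!)\,T^{-1}\int\E[H_k(X(t))\mid W_{1:N}]\,\dd t$. When $t$ lies near $t_d$, $\E[H_k(X(t))\mid W_{1:N}]$ is asymptotically $r_w(t-t_d)^kH_k(W_d)$. Using orthogonality of $H_k(W_d)$ across independent $d$, the explained variance is
\[
T^{-2}\sum_d\sum_k b_k\,J_k(w)^2+\text{cross terms}.
\]
The cross terms come from products $r_w(t-t_d)r_w(t-t_{d'})$ and window correlations, and they vanish as the separations grow. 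Dividing by $A_g(m)/T$ yields $N\ell_g(w;\sigma_\varepsilon^2,m)/T$.

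\textbf{Main obstacle.} The hard step is controlling the cross terms uniformly in $N$: showing that the $o(N/T)$ remainder is genuinely negligible and that summability survives the multi-window conditional expectation, which is no longer a pure product $r^kH_k$. I would first bound the error by $N^2\sup_{d\ne d'}\int|r_w(t-t_d)r_w(t-t_{d'})|\,\dd t$ plus $\|V_\pi-I\|$ terms, and require separations large enough that this is $o(N)$. That condition is exactly the sparse regime, $N\ell_g\ll T$ with spacing $\gg\tau$. This is also why the statement is qualified ``until saturation'': the linear additivity, like Eq.~\eqref{eq:boundaryresidual}, is only first order.
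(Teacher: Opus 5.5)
Your proposal is correct and follows the same route as the paper. The same-time case is reduced to the effective-span theorem with window noise $\sigma_\varepsilon^2/N$, and the dispersed case sums $N$ per-window contributions $B_g(w,m)/T^2$ against $A_g(m)/T$ after discarding cross-window terms in the sparse regime. Your sufficiency argument, the convergence in the noise level (exact here, since $J_k(w;\nu)=\{\eta(w)/(\eta(w)+\nu^2)\}^{k/2}J_k(w;0)$), and your identification of uniform-in-$N$ cross-term control make precise steps the paper asserts in one line each.
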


\begin{proof}
For the same-time allocation, averaging $N$ conditionally independent measurements changes only the window-noise variance from $\sigma_\varepsilon^2$ to $\sigma_\varepsilon^2/N$; Theorem~\ref{thm:s-effective-span} then gives Eq.~\eqref{eq:s-samebudget}, and continuity of the posterior projection in the noise variance gives the limit. For mutually separated windows, the off-diagonal window covariances and the cross terms in the explained state variance are negligible in the sparse regime. Each window contributes $B_g(w,m)/T^2+o(T^{-2})$, while the state-label variance is $A_g(m)/T+o(T^{-1})$. Summing the $N$ contributions gives Eq.~\eqref{eq:s-dispersedbudget}.
\end{proof}

For a mean label and point-like windows these expressions reduce to
\begin{equation}
\frac{2\tau_1}{T(1+\sigma_\varepsilon^2/N)}
\quad\text{and}\quad
\frac{2N\tau_1}{T(1+\sigma_\varepsilon^2)},
\end{equation}
respectively. Thus increasing $M$ purchases precision on fixed support, whereas increasing $D$ can purchase additional state support.

\section{Verification of Main Theorem 3: Boundary Localization}
Define
\begin{equation}
G_a(r)=\Cov\{\Ind(U>a),\Ind(V_r>a)\}.
\end{equation}
Plackett's identity gives
\begin{equation}
\frac{\partial}{\partial r}\Pr(U>a,V_r>a)
=\frac{\exp\{-a^2/(1+r)\}}{2\pi\sqrt{1-r^2}}.
\end{equation}
Since $G_a(0)=0$,
\begin{equation}
G_a(r)=\int_0^r\frac{\exp\{-a^2/(1+s)\}}{2\pi\sqrt{1-s^2}}\dd s.
\label{eq:s-plackett}
\end{equation}

\begin{theorem}[Boundary localization; corresponds to Main Theorem 3]
\label{thm:s-boundary}
Assume $\rho(u)\ge0$. Then
$A_a=2\int_0^\infty G_a\{\rho(u)\}\dd u$ is even and nonincreasing
in $|a|$. It is strictly decreasing in $|a|$ whenever $\rho$ is positive
on a set of nonzero measure, and hence is maximized at $a=0$.
\end{theorem}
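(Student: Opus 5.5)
The plan is to work directly from the Plackett representation Eq.~\eqref{eq:s-plackett} and push monotonicity in $|a|$ through two integrals: first over $s\in[0,r]$ to get properties of $G_a(r)$, then over $u\in[0,\infty)$ to get properties of $A_a$.

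First I establish the representation $A_a=2\int_0^\infty G_a\{\rho(u)\}\dd u$. For the state-only occupation label, Theorem~\ref{thm:s-trait-state} with $\alpha=0$ gives $A_a=2\int_0^\infty C_{g_a}\{\rho(u)\}\dd u$. Since $C_{g_a}=G_a$ by definition, the representation follows, with integrability supplied by Assumption~\ref{ass:s-summability}. I would also note that $A_a$ is finite for every $a$ once it is finite at $a=0$, because the integrand at $a$ is dominated by the integrand at $a=0$; this is shown below.

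Next, for fixed $s\in[0,1)$, the integrand of Eq.~\eqref{eq:s-plackett} is
\[
k_a(s)=\frac{\exp\{-a^2/(1+s)\}}{2\pi\sqrt{1-s^2}}.
\]
It depends on $a$ only through $a^2$, so it is even in $a$. Since $1+s>0$, it is strictly decreasing in $|a|$. Integrating over $s\in[0,r]$ shows that $G_a(r)$ is even and nonincreasing in $|a|$ for each $r\in[0,1]$. The case $r=1$ is handled by monotone convergence, since the integrand has only an integrable $(1-s)^{-1/2}$ singularity. For $r>0$ the interval has positive length and the integrands are strictly ordered pointwise, so $|a|<|a'|$ implies $G_a(r)>G_{a'}(r)$. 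For $r=0$ both sides vanish.

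Finally, I integrate over $u$. Because $\rho(u)\ge0$ and $\rho\le1$, evenness and the nonincreasing property transfer directly to $A_a$. For strictness, fix $|a|<|a'|$ and let $E=\{u:\rho(u)>0\}$. If $E$ has positive measure, then $G_a\{\rho(u)\}-G_{a'}\{\rho(u)\}>0$ on $E$ and $=0$ off $E$. The integral of a function that is strictly positive on a set of positive measure is strictly positive, so $A_a>A_{a'}$. Maximization at $a=0$ is then immediate.

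The main obstacle is justifying the Plackett formula itself, in particular the fact that $G_a(0)=0$ and the behaviour near $s=1$. I take Plackett's identity as given, as the paper does, so the only real care needed is in these measure-theoretic details and the finiteness of $A_a$.
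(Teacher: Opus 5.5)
Your proposal is correct and follows the paper's proof essentially verbatim. Both arguments observe that the Plackett integrand $\exp\{-a^2/(1+s)\}/(2\pi\sqrt{1-s^2})$ is even and strictly decreasing in $|a|$ for each $s\in[0,1)$, then carry these properties through the $s$-integral and then the $u$-integral. Your extra care fills in details the paper's one-line argument leaves implicit: finiteness of $A_a$ by domination from $A_0$ (which holds exactly when $\tau_1<\infty$), the endpoint $r=1$, and strictness via the positive-measure set $\{\rho>0\}$.
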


\begin{proof}
For each fixed $s\in[0,1)$, the integrand in Eq.~\eqref{eq:s-plackett} is even in $a$ and strictly decreases with $|a|$. Integrating first in $s$ and then in $u$ gives the claimed properties of $A_a$; strictness holds whenever $\rho$ is positive on a set of nonzero measure.
\end{proof}

Equation~\eqref{eq:s-ella} is also even because $H_k(-a)^2=H_k(a)^2$.  Under locally uniform convergence it is differentiable and $\ell_a'(w)|_{a=0}=0$.  No general monotonicity is claimed for $\ell_a(w)$: the remaining threshold dependence is through the relative reweighting of Hermite orders.  The paper's numerical comparison shows that it is substantially flatter than $A_a$ for the investigated OU and Mat\'ern windows.

\section{Ornstein--Uhlenbeck Worked Example}
For $\rho(u)=e^{-u/\tau}$,
\begin{equation}
\tau_k=\int_0^\infty e^{-ku/\tau}\dd u=\frac{\tau}{k}.
\end{equation}
Using $r=e^{-u/\tau}$,
\begin{equation}
A_a=2\tau\int_0^1\frac{G_a(r)}{r}\dd r
=2\tau\phi(a)^2\sum_{k\ge1}\frac{H_{k-1}(a)^2}{k!\,k}.
\end{equation}
For $a=0$,
\begin{equation}
G_0(r)=\frac{\arcsin(r)}{2\pi}.
\end{equation}
Therefore
\begin{align}
A_0
&=\frac{\tau}{\pi}\int_0^1\frac{\arcsin(r)}{r}\dd r\\
&=\frac{\tau\log 2}{2}.
\end{align}
The last integral follows by the substitution $r=\sin x$ and the standard integral $\int_0^{\pi/2}x\cot x\dd x=(\pi/2)\log2$.

\section{Sparse Multi-Window Consequence}
Suppose $D$ equal windows are mutually separated so that all cross-window correlations are $o(1)$ in the asymptotic regime, and suppose $D\ell_g(w)/T=o(1)$.  Then the explained state variances add to first order:
\begin{equation}
\cI_{\rm state}(\pi_D)=\frac{D\ell_g(w)}{T}+o(D/T).
\end{equation}
Under per-window cost $c_0+c_1w$ and total per-object budget $B$, this gives the leading efficiency criterion
\begin{equation}
\max_w\frac{\ell_g(w)}{c_0+c_1w}.
\end{equation}
This is only a sparse-regime consequence.  Non-sparse placement must use the full $q_\pi$ in Eq.~\eqref{eq:s-risk} and is closely related to existing excursion-set sequential-design problems.

\section{Simulation Protocol and Additional Results}
The trait--state verification uses 50 repetitions and 2,000 independent objects per scenario. The equal-segment-budget experiment uses 30 repetitions and 1,500 objects. OU paths are generated on grids by the exact transition
\begin{equation}
X(t+\Delta t)=e^{-\Delta t/\tau}X(t)+\sqrt{1-e^{-2\Delta t/\tau}}\,\zeta,
\qquad \zeta\sim\cN(0,1).
\end{equation}
The final experiments were run on a Mac Studio with an Apple M2 Ultra CPU
(24 cores) and 192 GB RAM under macOS 26.5.2, using Python 3.14.4,
NumPy 2.4.4, SciPy 1.17.0, pandas 3.0.0, and Matplotlib 3.10.8; no GPU
was used.
For each object, the continuous occupation proportion is approximated on the fine grid.  A noisy point observation at $T/2$ is generated with variance $0.2$.  The predictor is the exact posterior mean of the discretized occupation proportion, not a fitted regression model.

\subsection{Reported Quantities}
For every scenario, the analysis computes:
\begin{itemize}[leftmargin=1.4em,itemsep=1pt]
\item exact finite-$T$ label variance;
\item exact one-snapshot explainability;
\item exact Bayes MSE $R^*=\Var(\Theta)(1-\cI)$;
\item Monte Carlo label variance, explained variance, explainability, and MSE;
\item Monte Carlo means and 95\% half-widths across repetitions;
\item equal-budget $D$--$M$ curves comparing repeated same-time segments with dispersed occasions.
\end{itemize}
The theory curves for effective spans are evaluated by numerical quadrature of $\tau_k$ and $J_k(w)$ with stable normalized-Hermite recurrences.  The OU boundary coefficient is also evaluated from its independent Plackett integral, providing a cross-check of the Hermite implementation.

\subsection{Equal-Segment-Budget Experiment}
The state-only experiment sets $\alpha=0$, $c=0$, $T/\tau=20$, and raw segment-noise variance one. For each total budget $N\in\{1,2,4,8,16,32,64\}$, the same-time protocol uses one midpoint occasion with averaged noise variance $1/N$, while the coverage protocol uses $N$ evenly spaced occasions with variance one each. Exact explainability is computed from Eq.~\eqref{eq:s-risk} using the arcsine transform; Monte Carlo uses the exact discrete OU posterior. At $N=64$, exact explainability is $0.0970$ for same-time replication and $0.8083$ for dispersed occasions.

\subsection{Numerical Cross-Checks}
The final values in the main-paper table are checked by two independent
calculations: direct evaluation of the exact formulas and Monte Carlo
estimation under the discretized posterior. The displayed half-widths summarize
variation across the independent repetitions described above.

\section{Interpretive Boundaries}
\paragraph{Trait channel.}
Purely cross-sectional observations do not identify $\alpha$, but ordinary
test--retest data do under the model. The trait ceiling needs only $\alpha$ and
$\sigma_\varepsilon^2$ and therefore does not share the state channel's dense
short-lag calibration requirement.

\paragraph{State channel.}
Occupation-time state variance and effective span depend on $\tau_2,\tau_3,\ldots$, not only $\tau_1$. Two kernels matched at $\tau_1$ can therefore agree on the leading long-horizon mean-label coefficient while disagreeing on occupation-label information. State-channel analysis requires a dense short-lag calibration subset, external longitudinal data, or a defensible parametric family.

\paragraph{Average effective span.}
If one defines a population state-effective span as $\E_m B_g(w,m)/\E_m A_g(m)$, this is a ratio of expectations, not an expectation of individual ratios.  It is weighted toward individuals with larger state-driven label variance, which for occupation labels are those near the threshold.  It should not be interpreted as the span of a typical object.

\endgroup
\end{document}